\documentclass{article}

\usepackage{microtype}
\usepackage{graphicx}
\usepackage{subcaption}
\usepackage{booktabs} %

\usepackage{hyperref}

\usepackage[preprint]{icml2026}

\usepackage{amsmath}
\usepackage{amssymb}
\usepackage{mathtools}
\usepackage{amsthm}
\usepackage{framed}
\usepackage{booktabs}

\usepackage[capitalize,noabbrev]{cleveref}

\usepackage{thmtools}
\theoremstyle{plain}
\newtheorem{theorem}{Theorem}[section]
\newtheorem{proposition}[theorem]{Proposition}

\newtheorem{corollary}[theorem]{Corollary}
\theoremstyle{definition}
\newtheorem{definition}[theorem]{Definition}

\theoremstyle{remark}

\usepackage[most]{tcolorbox}
\usepackage{multirow}
\usepackage{url}
\usepackage{xfrac}
\usepackage{fix-cm}  %
\usepackage{tabularx}
\usepackage{makecell}
\usepackage{algorithm}
\usepackage{algpseudocode}
\usepackage{enumitem}

\usepackage{amsmath,amsfonts,bm}

\newcommand{\vectorized}{\mathrm{vec}}

\newcommand{\diag}{\mathrm{diag}}

\def\eqref#1{equation~\ref{#1}}

\def\1{\bm{1}}

\def\vtheta{{\bm{\theta}}}

\def\vd{{\bm{d}}}

\def\vg{{\bm{g}}}

\def\vm{{\bm{m}}}

\def\vu{{\bm{u}}}
\def\vv{{\bm{v}}}

\def\vx{{\bm{x}}}

\def\mA{{\bm{A}}}
\def\mB{{\bm{B}}}
\def\mC{{\bm{C}}}
\def\mD{{\bm{D}}}

\def\mG{{\bm{G}}}

\def\mI{{\bm{I}}}

\def\mL{{\bm{L}}}
\def\mM{{\bm{M}}}

\def\mR{{\bm{R}}}

\def\mU{{\bm{U}}}
\def\mV{{\bm{V}}}
\def\mW{{\bm{W}}}
\def\mX{{\bm{X}}}

\def\mZ{{\bm{Z}}}

\def\mSigma{{\bm{\Sigma}}}

\DeclareMathAlphabet{\mathsfit}{\encodingdefault}{\sfdefault}{m}{sl}
\SetMathAlphabet{\mathsfit}{bold}{\encodingdefault}{\sfdefault}{bx}{n}

\def\gB{{\mathcal{B}}}

\def\gD{{\mathcal{D}}}

\def\gG{{\mathcal{G}}}

\def\gL{{\mathcal{L}}}

\def\gP{{\mathcal{P}}}

\def\gZ{{\mathcal{Z}}}

\newcommand{\E}{\mathbb{E}}

\newcommand{\R}{\mathbb{R}}

\DeclareMathOperator{\sign}{sign}
\DeclareMathOperator{\Tr}{Tr}

\usepackage{amssymb}
\makeatletter
\newcommand*{\transpose}{\bgroup\@ifstar{\mathpalette\@transpose{\mkern-3.5mu}\egroup}{\mathpalette\@transpose\relax\egroup}}
\newcommand*{\@transpose}[2]{\setbox0=\hbox{\m@th$#1#2\intercal$}\raise\dp0\box0}
\makeatother

\usepackage{xspace}
\newcommand*{\ie}{i.e.\@\xspace}
\newcommand*{\eg}{e.g.\@\xspace}
\newcommand*{\cf}{c.f.\@\xspace}
\newcommand*{\wrt}{w.r.t.\@\xspace}

\usepackage{pifont}
\newcolumntype{B}{>{\columncolor{MidnightBlue!10}}l}
\newcolumntype{R}{>{\columncolor{Maroon!10}}l}
\newcolumntype{G}{>{\columncolor{Gray!10}}l}

\usepackage{tikz}
\usetikzlibrary{positioning}
\usepackage{tikz-3dplot}
\usetikzlibrary{arrows.meta, positioning, calc, shapes.geometric}

\usepackage[textsize=tiny]{todonotes}

\icmltitlerunning{Clarifying Shampoo: Adapting Spectral Descent to Stochasticity and the Parameter Trajectory}

\begin{document}

\twocolumn[
  \icmltitle{\texorpdfstring{Clarifying Shampoo: \\ Adapting Spectral Descent to Stochasticity and the Parameter Trajectory}{Clarifying Shampoo: Adapting Spectral Descent to Stochasticity and the Parameter Trajectory}}

  \icmlsetsymbol{equal}{*}

  \begin{icmlauthorlist}
    \icmlauthor{Runa Eschenhagen}{cam}
    \icmlauthor{Anna Cai}{meta}
    \icmlauthor{Tsung-Hsien Lee}{}
    \icmlauthor{Hao-Jun Michael Shi}{meta}
  \end{icmlauthorlist}

  \icmlaffiliation{cam}{University of Cambridge}
  \icmlaffiliation{meta}{Meta Platforms}

  \icmlcorrespondingauthor{Runa Eschenhagen}{re393@cam.ac.uk}
  \icmlcorrespondingauthor{Anna Cai}{annacai@meta.com}

  \icmlkeywords{Machine Learning, ICML}

  \vskip 0.3in

]

\printAffiliationsAndNotice{}  %

\begin{abstract}
    Optimizers leveraging the matrix structure in neural networks, such as Shampoo and Muon, are more data-efficient than element-wise algorithms like Adam and Signum.
    While in specific settings, Shampoo and Muon reduce to spectral descent analogous to how Adam and Signum reduce to sign descent, their general relationship and relative data efficiency under controlled settings remain unclear.
    Through extensive experiments on language models, we demonstrate that Shampoo achieves higher token efficiency than Muon, mirroring Adam's advantage over Signum.
    We show that Shampoo's update applied to weight matrices can be decomposed into an \emph{adapted} Muon update.
    Consistent with this, Shampoo's benefits can be exclusively attributed to its application to weight matrices, challenging interpretations agnostic to parameter shapes.
    This admits a new perspective that also avoids shortcomings of related interpretations based on variance adaptation and whitening:
    rather than enforcing semi-orthogonality as in spectral descent, Shampoo's updates are \emph{time-averaged semi-orthogonal in expectation}.
\end{abstract}

\section{Introduction}
\label{sec:background}

\begin{figure*}[t]
\centering

\newlength{\eqboxheight}
\setlength{\eqboxheight}{1.4cm} 

\newlength{\figheight}
\setlength{\figheight}{3.2cm}

\newlength{\totalheight}
\setlength{\totalheight}{\dimexpr \eqboxheight + \figheight \relax}

\begin{minipage}[t]{0.32\textwidth}
\centering
\textbf{Adam}

\begin{minipage}[t][\totalheight][t]{\linewidth}
    \centering
    \begin{minipage}[c][\eqboxheight][c]{\linewidth}
        \centering
        \footnotesize
        \begin{equation*}
             \underbrace{\mathcolor{MidnightBlue}{\left( |\vm_t| \oslash \sqrt{\vv_t} \right)}}_{\text{\tiny\textcolor{MidnightBlue}{Adaptation}}} \odot \underbrace{\mathcolor{Maroon}{\mathrm{sign}\left( \vm_t \right)}}_{\text{\tiny\textcolor{Maroon}{Signum}}}
        \end{equation*}
    \end{minipage}
    \vfill
    \includegraphics[height=\figheight, keepaspectratio]{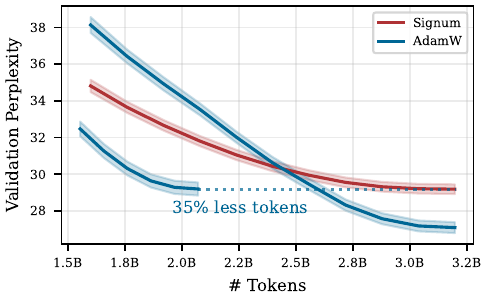}
\end{minipage}
\end{minipage}
\hfill
\begin{minipage}[t]{0.32\textwidth}
\centering
\phantom{\textbf{Adam}} 

\begin{minipage}[t][\totalheight][t]{\linewidth}
    \centering
    
    \vspace*{-2em} 
    
    \resizebox{0.95\linewidth}{!}{%
        \tdplotsetmaincoords{70}{115}
        \begin{tikzpicture}[tdplot_main_coords, transform shape]
        
        \def\x_len{3.5} 
        \def\y_len{3.5} 
        \def\z_len{4.0}
        
        \definecolor{vec_fill}{RGB}{80, 80, 80}  %
        \definecolor{vec_stroke}{RGB}{80, 80, 80}  %
        \definecolor{mat_fill}{RGB}{80, 80, 80}  %
        \definecolor{mat_stroke}{RGB}{80, 80, 80}  %
        \definecolor{axis_col}{RGB}{0, 0, 0}
        \definecolor{beta_col}{RGB}{80, 80, 80}
        \tikzset{
            algo_node/.style={rectangle, rounded corners=2pt, fill=white, draw=gray!30, thin, align=center, font=\sffamily\scriptsize, inner sep=3pt, text=black, fill opacity=0.95},
            axis_arrow/.style={->, >=Latex, ultra thick, color=axis_col},
            dot/.style={circle, inner sep=0pt, minimum size=4pt, fill=white, thick}
        }
    
        \coordinate (O) at (0,0,0); 
        \coordinate (V_BL) at (0,0,0);          \coordinate (V_BR) at (\x_len,0,0);     
        \coordinate (V_TL) at (0,0,\z_len);     \coordinate (V_TR) at (\x_len,0,\z_len);
        \coordinate (M_BL) at (0,\y_len,0);     \coordinate (M_BR) at (\x_len,\y_len,0);     
        \coordinate (M_TL) at (0,\y_len,\z_len); \coordinate (M_TR) at (\x_len,\y_len,\z_len);
    
        \draw[axis_arrow] (O) -- (0,0,\z_len + 1.2) node[anchor=south] {\textbf{First EMA} ($\beta_1$)};
        \draw[axis_arrow] (O) -- (\x_len + 1.2,0,0) node[anchor=north west, xshift=-5pt, yshift=-8pt] {\textbf{Second EMA} ($\beta_2$)};
        \draw[axis_arrow] (O) -- (0,\y_len + 1.5,0) node[anchor=north east, align=right, xshift=-2pt, yshift=-3pt] {\textbf{Geometry}};
    
        \draw[dashed, gray!50, thick] (V_TL) -- (M_TL);
        \draw[dashed, gray!50, thick] (V_TR) -- (M_TR);
        \draw[dashed, gray!50, thick] (V_BR) -- (M_BR);
    
        \fill[vec_fill, opacity=0.25] (V_BL) -- (V_BR) -- (V_TR) -- (V_TL) -- cycle;
        \draw[vec_stroke, thick] (V_BL) -- (V_BR) -- (V_TR) -- (V_TL) -- cycle;
    
        \fill[mat_fill, opacity=0.25] (M_BL) -- (M_BR) -- (M_TR) -- (M_TL) -- cycle;
        \draw[mat_stroke, thick] (M_BL) -- (M_BR) -- (M_TR) -- (M_TL) -- cycle;
    
        \foreach \c in {V_BL, V_BR, V_TL, V_TR} { \node[dot, draw=vec_stroke, fill=vec_stroke] at (\c) {}; }
        \foreach \c in {M_BL, M_BR, M_TL, M_TR} { \node[dot, draw=mat_stroke, fill=mat_stroke] at (\c) {}; }
    
        \node[algo_node, anchor=south west] at (V_BL) {\textbf{SignGD} \\ ($\ell_\infty$ norm) \\[2pt]{\color{beta_col}\tiny $\beta_1=0, \beta_2=0$}};
        \node[algo_node, anchor=south west] at (V_BR) {\textbf{RMSProp}\\[2pt]{\color{beta_col}\tiny $\beta_1=0, \beta_2>0$}};
        \node[algo_node, anchor=south west] at (V_TL) {\textbf{Signum}\\[2pt]{\color{beta_col}\tiny BCOS-m}\\[2pt]{\color{beta_col}\tiny $\beta_1>0, \beta_2=0$}};
        \node[algo_node, anchor=south west] at (V_TR) {\textbf{Adam}\\[2pt]{\color{beta_col}\tiny $\beta_1>0, \beta_2>0$}};
    
        \node[algo_node, anchor=south west] at (M_BL) {\textbf{SpectralGD}\\($S_\infty$ norm)\\[2pt]{\color{beta_col}\tiny $\beta_1=0, \beta_2=0$}};
        \node[algo_node, anchor=south west] at (M_TL) {\textbf{Muon}\\[2pt]{\color{beta_col}\tiny BCOS-m}\\[2pt]{\color{beta_col}\tiny $\beta_1>0, \beta_2=0$}};
        \node[algo_node, anchor=south west] at (M_TR) {\textbf{Shampoo}\\[2pt]{\color{beta_col}\tiny $\beta_1>0, \beta_2>0$}};
    
        \node[font=\bfseries\scriptsize, color=vec_stroke!80!black, rotate=90, anchor=south, yshift=2mm] at ($(V_TL)!0.7!(V_BL)$) {\makecell{Vector \\ Geometry}};
        \node[font=\bfseries\scriptsize, color=mat_stroke!80!black, rotate=90, anchor=south, yshift=2mm] at ($(M_TL)!0.7!(M_BL)$) {\makecell{Matrix \\ Geometry}};
    
        \end{tikzpicture}%
    }
    \vfill
\end{minipage}
\end{minipage}
\hfill
\begin{minipage}[t]{0.32\textwidth}
\centering
\textbf{Shampoo}

\begin{minipage}[t][\totalheight][t]{\linewidth}
    \centering
    \begin{minipage}[c][\eqboxheight][c]{\linewidth}
        \centering
        \footnotesize
        \resizebox{\linewidth}{!}{%
            $ \underbrace{\mathcolor{MidnightBlue}{\mL_t^{-p} \left(\mM_t \mM_t^\transpose \right)^{\frac{1}{4}}}}_{\text{\tiny\textcolor{MidnightBlue}{Left Adaptation}}} \underbrace{\mathcolor{Maroon}{\mU_t \mV_t^\transpose}}_{\text{\tiny\textcolor{Maroon}{Muon}}} \underbrace{\mathcolor{MidnightBlue}{\left(\mM_t^\transpose \mM_t \right)^{\frac{1}{4}} \mR_t^{-p}}}_{\text{\tiny\textcolor{MidnightBlue}{Right Adaptation}}} $%
        }
    \end{minipage}
    \vfill
    \includegraphics[height=\figheight, keepaspectratio]{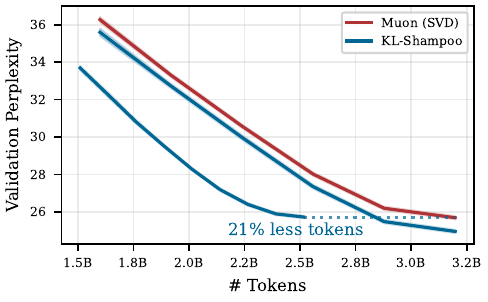}
\end{minipage}
\end{minipage}

\caption{\textbf{Shampoo : Muon :: Adam : Signum.}
    Adam has previously been interpreted as element-wise scaled Signum (\textbf{left, top}), which uses the sign of the gradient's EMA \citep{balles2017dissecting,orvieto2025search}. We show that Shampoo can analogously be understood as Muon, the matrix sign of the gradient's EMA, left- and right-multiplied by matrices (\textbf{right, top}).
    Just like for Adam (\textbf{left, bottom}), the adaptation in Shampoo results in improved token efficiency compared to Muon in a controlled language modeling setting (\textbf{right, bottom}).
    Different variants of the algorithms are analogous in both geometries (\textbf{middle}); see \Cref{tab:optimizer-variants-betas} for the precise relationships.
}
\label{fig:summary}
\end{figure*}

Neural network training is typically modeled as the expected risk minimization problem
\begin{equation}
    \min_{\vtheta \in \R^d} \gL(\vtheta) =  \E_{\gB \sim \gD} \left[ \ell_\gB \left( \vtheta \right) \right],
\end{equation}
where $\gB$ is a mini-batch of data sampled from a stationary distribution $\gD$, and $\ell_\gB: \R^d \rightarrow \R$ is a loss function evaluated on batch $\gB$.
This is optimized using iterative algorithms with the stochastic gradient $\vg_t = \nabla_{\vtheta_t} \ell_{\gB_t}\left( \vtheta_t \right)$ at iteration $t$.\footnote{Because the methods analyzed in this work operate layer-wise, we let $\vtheta_t \in \R^d$ denote the (flattened) parameters of a single layer and $\vg_t \in \R^d$ its corresponding gradient.}

Adam, the de facto standard for neural network training, preconditions the gradient with an element-wise scaling \cite{kingma2014adam}.
Recently, optimizers leveraging the matrix structure in neural networks, like Shampoo and Muon, have shown potential to outperform AdamW \citep{kasimbeg2025accelerating,liu2025muon,chen2025scale}. 
However, these matrix optimizers are not well-understood.

In this work, we show that the relationship between Shampoo and Muon is analogous to that of Adam and Signum; see \Cref{fig:summary}. 
Specifically, we demonstrate that Shampoo is more token efficient than Muon and propose a unified description of how adaptive methods improve upon their non-adaptive counterparts by relaxing their constraints to account for stochasticity and the parameter trajectory.

\subsection{Adam and Sign Descent}

Recall that Adam maintains exponential moving averages (EMAs) of the stochastic gradient and its square \citep{kingma2014adam}:
\begin{equation}
\begin{split}
    \vm_t &= \beta_1 \vm_{t-1} + \left( 1 - \beta_1 \right) \vg_t, \\
    \vv_t &= \beta_2 \vv_{t-1} + \left( 1 - \beta_2 \right) \vg_t^2, \\
    \vtheta_{t+1} &= \vtheta_{t} - \alpha_t \vm_t \oslash (\sqrt{\vv_t} + \epsilon \mathbf{1}),
\end{split}
\end{equation}
where all operations are performed element-wise. 
Here, $\beta_1, \beta_2 \in \left[ 0, 1 \right)$ are the EMA hyperparameters, $\alpha_t > 0$ is the learning rate, and $\epsilon > 0$.\footnote{We ignore bias correction for simplicity, as these terms can be absorbed into the learning rate. The buffers are initialized as $\mathbf{0}$.}

Adam can be interpreted as a variant of sign descent \citep{balles2017dissecting,orvieto2025search}. Specifically, setting $\epsilon = 0$ and assuming all elements of $\vv_t$ are non-zero for all $t$ yields the decomposition:
\begin{equation}
\label{eq:adam-decomposition}
\begin{split}
    \vm_t \oslash\sqrt{\vv_t} &= (|\vm_t| \oslash \sqrt{\vv_t}) \odot \sign\left( \vm_t \right) \\
    &= \underbrace{\frac{1}{\sqrt{1 + \left( \vv_t - \vm_t^2 \right) / \vm_t^2}}}_{\text{Adaptation}} \odot \underbrace{\sign\left( \vm_t \right)}_{\text{Signum}}.
\end{split}
\end{equation}
Ignoring the element-wise adaptation, we recover \emph{Signum}, which updates the weights using the sign of the EMA of the gradients.
Setting $\beta_1 = \beta_2 = \epsilon = 0$ recovers (stochastic) sign descent with $\vg_t / \sqrt{\vg_t^2} = \sign\left( \vg_t \right)$ \citep[SignGD]{bernstein2018signsgd}. Geometrically, this update corresponds to the direction of steepest descent with respect to the $\ell_\infty$ norm in the deterministic setting, but discards the magnitude information that scales the update by $\|\vg_t\|_1$ \citep{kelner2014almostlineartime}.
The connection to sign descent captures aspects of Adam's empirical behavior, such as faster convergence with larger batch sizes \citep{kunstner2023noise} and robustness to heavy-tailed class imbalance \citep{kunstner2024heavytailed}.

\citet{balles2017dissecting} show that, assuming $\vm_t \approx \E[\vg_t]$ and $\vv_t \approx \E[\vg_t^2]$, the element-wise adaptation can be interpreted as a form of \emph{variance adaptation}. Specifically, it contains an estimate of the \emph{relative variance} $(\vv_t - \vm_t^2) / \vm_t^2 \in [0, \infty)$, implying that the scaling of Signum lies in $(0, 1]$.

\subsection{Shampoo}

Shampoo was originally introduced as a Kronecker-factored upper bound for the full-matrix Adagrad preconditioner \citep{gupta2018shampoo}. 
For weight matrix $\mW_t \in \R^{m \times n}$ and corresponding gradient matrix $\mG_t \in \R^{m \times n}$, the simplified Shampoo update with $p, \epsilon > 0$ is\footnote{With $\mathrm{vec}\left( \mW_t \right) = \vtheta_t$ and $\mathrm{vec}\left( \mG_t \right) = \vg_t$. We omit the EMA over the gradient $\mM_t$ and bias correction for simplicity.}
\begin{equation}
\label{eq:shampoo}
\begin{split}
    \mL_t &= \beta_2 \mL_{t-1} + \left( 1 - \beta_2 \right) \mG_t \mG_t^\transpose \; \in \R^{m \times m}, \\
    \mR_t &= \beta_2 \mR_{t-1} + \left( 1 - \beta_2 \right) \mG_t^\transpose \mG_t \; \in \R^{n \times n}, \\
    \mW_{t+1} &= \mW_{t} - \alpha_t (\mL_t + \epsilon \mathbf{I}_m)^{-p} \mG_t (\mR_t + \epsilon \mathbf{I}_n)^{-p}.
\end{split}
\end{equation}
Shampoo was originally introduced with $p = \sfrac{1}{4}$ (Shampoo$^{\sfrac{1}{4}}$) and sum accumulation, but it is often used with $p = \sfrac{1}{2}$ (known as Shampoo$^2$, here Shampoo$^{\sfrac{1}{2}}$) \citep{shi2023distributed,morwani2025new}.
Note that \Cref{eq:shampoo} is equivalent to a vectorized update with a Kronecker-factored preconditioner, $\vtheta_{t + 1} = \vtheta_t - \alpha_t \left( \mR_t\otimes \mL_t \right)^{-p} \vg_t$.

While Shampoo's preconditioner is related to the optimal Kronecker-factored approximation to full-matrix Adam in Frobenius norm \citep{morwani2025new}, \citet{lin2025understanding} recently proposed to minimize the KL divergence of two zero-mean Gaussian distributions,
\begin{equation}
    \label{eq:kl-divergence}
    \min_{\mL \in \R^{m \times m}, \; \mR \in \R^{n \times n}} ~ \mathrm{KL}\left( \E\left[ \vg \vg^\transpose \right] || \mR \otimes \mL \right),
\end{equation}
with optimality condition
\begin{equation}
\begin{aligned}
    \label{eq:kl-optimality}
    \mL &= \E\left[ \mG \mR^{-1} \mG^\transpose \right], & \mR &= \E\left[ \mG^\transpose \mL^{-1} \mG \right].
\end{aligned}
\end{equation}
Solving this problem through gradient descent and using a single sample Monte Carlo approximation to the expectation yields KL-Shampoo, whose factor matrices are defined as%
\begin{equation}
\label{eq:kl-shampoo}
\begin{split}
    \mL_t &= \beta_2 \mL_{t-1} + \left( 1 - \beta_2 \right) \mG_t \mR_{t-1}^{-1} \mG_t^\transpose, \\
    \mR_t &= \beta_2 \mR_{t-1} + \left( 1 - \beta_2 \right) \mG_t^\transpose \mL_{t-1}^{-1} \mG_t. \\
\end{split}
\end{equation}
A similar coupled update was first proposed for natural gradient variational inference \citep{lin2019fast}.
The relations in \Cref{eq:kl-optimality} were also identified as optimality conditions for a certain definition of whitening in \citet{vyas2025improving}.

To reduce runtime, the inverse roots of Shampoo's factor matrices are typically not updated at every iteration. 
To compensate for both the induced staleness and the mis-scaling in the eigenvalues, the per-layer update magnitude is usually grafted from a base optimizer like Adam \citep{agarwal2020disentangling,anil2020scalable,shi2023distributed,eschenhagen2025purifying}.
Alternatively, we can run Adam in Shampoo's eigenbasis \citep[SOAP]{vyas2025soap}, or similarly, decouple and correct Shampoo's eigenvalues (EShampoo) \citep{george2018ekfac,anil2020scalable,eschenhagen2025purifying}.

\subsection{Spectral Descent and Modular Duality}

Analogous to the vector case, where the $\ell_\infty$ norm induces sign descent, one can define steepest descent for matrices using the Schatten-$\infty$ ($S_\infty$) or spectral norm.
Given the reduced SVD of the gradient $\mG_t = \mU_t \mSigma_t \mV_t^\transpose$, we can write (stochastic) spectral descent (SpectralGD) as
\begin{equation}
\label{eq:spectral-descent}
    \mW_{t+1} = \mW_{t} - \alpha_t c(\mG_t) \mU_t \mV_t^\transpose,
\end{equation}
where $c: \R^{m \times n} \rightarrow \R$ is a scaling factor.
The matrix $\mU_t \mV_t^\transpose$ is the unitary factor of the polar decomposition of $\mG_t$; it represents the closest semi-orthogonal matrix to $\mG_t$ in Frobenius norm \citep{schnemann1966generalized}.
It generalizes the sign function to matrices by applying it element-wise to the singular values, \ie, mapping $\bm{\sigma}_t \mapsto \sign(\bm{\sigma}_t)$.
For steepest descent under the spectral norm, we have to scale the update by the nuclear norm of the gradient, \ie, $c(\mG_t) = \| \bm{\sigma}_t \|_1$, where $\bm{\sigma}_t = \diag(\mSigma_t)$, similar to using $\|\vg_t\|_1$ in sign descent.
This method was first proposed for neural network training in \citet{carlson2015stochastic,carlson2015preconditioned}.

Motivated by modular duality \citep{large2024scalable}, \citet{bernstein2025modular} propose using the RMS-RMS operator norm for hidden linear layers, which admits a corresponding spectral descent update with $c(\mG_t) = \sqrt{m / n}$, and recovers the maximal update parameterization \citep[$\mu$P]{yang2021tp5,yang2024spectral}.\footnote{The RMS-RMS operator norm is defined as $\|\mX\|_{\mathrm{RMS}\rightarrow\mathrm{RMS}} = \sqrt{\frac{n}{m}} \|\mX\|_2$, with $||\vx||_\mathrm{RMS} = \frac{1}{\sqrt{n}} ||\vx||_2$.}
\citet{bernstein2024oldoptimizernewnorm} propose to estimate the matrix sign using a Newton-Schulz iteration that only relies on matrix multiplication, making it amenable to parallelization and low-precision data types.
Applying this operation on (Nesterov) momentum yields the popular Muon optimizer \citep{jordan2024muon}.

While not obviously related at first glance, \citet{bernstein2024oldoptimizernewnorm} highlight that Shampoo with $\beta_1 = \beta_2 = \epsilon = 0$ and $p=1/4$ recovers spectral descent:\footnote{Assuming $\mG_t$ has full rank, If $\mG_t$ is rank-deficient, one can recover spectral descent by using the pseudoinverse instead.}
\begin{equation}
\label{eq:shampoo-spectral}
    \mL_t^{-\frac{1}{4}} \mG_t \mR_t^{-\frac{1}{4}} = (\mG_t \mG_t^\transpose)^{-\frac{1}{4}} \mG_t (\mG_t^\transpose \mG_t)^{-\frac{1}{4}} = \mU_t \mV_t^\transpose.
\end{equation}
A similar result holds for one-sided Shampoo with $p = 1/2$.
This is directly analogous to Adam recovering sign descent with the same hyperparameter settings.%

\section{Shampoo : Muon :: Adam : Signum}
\label{sec:analogy}
\begin{table*}[t]
\caption{\textbf{Shampoo : Muon :: Adam : Signum.} Llama 3 architecture trained on C4 data \citep{grattafiori2024llama3,raffel2023exploring}. We report the final validation perplexity (mean $\pm 2\sigma$ across 10 random seeds) for the best hyperparameter setting for different token budgets ($T$), model sizes (measured by the number of parameters $P$), and batch size $B$. The token budgets are defined in \Cref{tab:model-config} in \Cref{sec:experiments-appendix}. For each algorithm, we sweep the learning rate $\alpha_t$ and EMA hyperparameters $\beta_1$ and, where applicable, $\beta_2$. We also sweep $\epsilon$ for Shampoo$^{\sfrac{1}{4}}$, Shampoo$^{\sfrac{1}{2}}$, and KL-Shampoo ($p = \sfrac{1}{2}$, see \Cref{sec:kl-shampoo-implementation} for implementation details). Decoupled weight decay is applied to all parameters and fixed to $0.1$. After 10\% warmup, we use a cosine decay schedule for the learning rate until 0. To ensure a fair comparison, we update Shampoo's preconditioner every iteration using an eigendecomposition, and use SVD for Muon's semi-orthogonalization. All matrix optimizers graft from Adam and are only applied to the hidden weights matrices of the model.}
\label{tab:core-methods}
\centering
\begin{tabular}{lllBBBRBR}
\toprule
$P$ & $T$ & $B$ & Shampoo$^{\sfrac{1}{4}}$ & Shampoo$^{\sfrac{1}{2}}$ & KL-Shampoo & Muon (SVD) & AdamW & Signum \\
\hline
\multirow{3}{*}{320M} & \multirow{2}{*}{$1\times$} & 64 & $25.84 \pm 0.10$ & $25.20 \pm 0.07$ & $25.29 \pm 0.13$ & $25.89 \pm 0.07$ & $27.09 \pm 0.30$ & $29.19 \pm 0.26$ \\
& & 256 & $25.36 \pm 0.11$ & $25.26 \pm 0.05$  & $24.95 \pm 0.09$ & $25.68 \pm 0.09$ & $27.74 \pm 0.56$ & $31.38 \pm 0.47$ \\
& $8\times$ & 256 & $20.62 \pm 0.32$ & $20.52 \pm 0.06$ & $20.33 \pm 0.05$ & $20.67 \pm 0.05$ & $20.93 \pm 0.18$ & $21.50 \pm 0.28$ \\
\hline
1.5B & $1\times$ & 256 & $15.03 \pm 0.04$ & $14.85 \pm 0.02$ & $14.96 \pm 0.04$ & $15.11 \pm 0.02$ & $15.96 \pm 0.12$ & $16.96 \pm 0.18$  \\
\bottomrule
\end{tabular}
\end{table*}

Based on the observation in \Cref{eq:shampoo-spectral}, we can decompose the Shampoo update into a form structurally analogous to Adam's decomposition in \Cref{eq:adam-decomposition}:

\begin{equation}
\label{eq:shampoo-decomposition}
    \mL_t^{-p} \mM_t \mR_t^{-p} = \underbrace{\mL_t^{-p} \left(\mM_t \mM_t^\transpose \right)^{\frac{1}{4}}}_{\text{Left Adaptation}} \underbrace{\mU_t \mV_t^\transpose}_{\text{Muon}} \underbrace{\left(\mM_t^\transpose \mM_t \right)^{\frac{1}{4}} \mR_t^{-p}}_{\text{Right Adaptation}},
\end{equation}
where $\mM_t = \mU_t \mSigma_t \mV_t^\transpose$ is now the reduced SVD of the EMA of the gradient $\mM_t$.
This decomposition reveals that Shampoo consists of two distinct components: (1) the \emph{Muon} update, \ie the \emph{matrix sign} (or polar factor) of $\mM_t$; (2) left and right \emph{adaptation matrices}.
These matrices mirror the element-wise adaptation in Adam, which enhances its token efficiency compared to Signum \citep{orvieto2025search}.
We hypothesize that an analogous choice of the adaptation matrices in Shampoo (determined by $\mL_t$, $\mR_t$, and $p$) should also enhance Muon.

To test this hypothesis, we perform a comprehensive set of experiments on language models comparing Shampoo's and Muon's \emph{token efficiency} by measuring the best achievable validation perplexity given a fixed token budget.
We do not use techniques that improve the memory and computational efficiency of these methods, such as blocking, stale preconditioning, and approximate matrix computations, to isolate the algorithmic contributions of the adaptation matrices.
All optimizers are implemented in the PyTorch Distributed Shampoo codebase \citep{shi2023distributed}, ensuring they all share the same general code path, \eg for momentum and weight decay, differing only in their preconditioners or (matrix) sign operation.
See \Cref{sec:algos-appendix} for pseudocode and implementation details, \Cref{tab:core-methods} and \Cref{sec:language-experiments} for the detailed experimental setup.

\subsection{Results \& Observations}
\label{sec:results}

Our main results are presented in \Cref{tab:core-methods}.

\paragraph{General trends.}
Consistent with \citet{orvieto2025search}, AdamW outperforms Signum across all settings.
As expected, the Shampoo variants and Muon consistently outperform AdamW across all settings \citep{kasimbeg2025accelerating,chen2025scale}.
Confirming our hypothesis, all Shampoo variants match or outperform Muon with SVD. %
Among the Shampoo variants, we observe that KL-Shampoo and Shampoo$^{\sfrac{1}{2}}$ consistently outperform Shampoo$^{\sfrac{1}{4}}$. %

\paragraph{Effect of batch size.}
We observe that Signum and AdamW perform best at a batch size of 64, whereas Shampoo$^{\sfrac{1}{4}}$, KL-Shampoo, and Muon perform better at a batch size of 256.
We tested only two different batch sizes and are likely not at the critical batch size for each method, limiting our ability to draw definitive conclusions.
However, these results suggest, consistent with prior work, that the critical batch size is higher for matrix-based methods than for element-wise methods \citep{vyas2025soap,pethick2025training}.

\paragraph{Effect of model and dataset size.}
Notably, the standard Chinchilla token budget may no longer be compute-optimal for Shampoo and Muon \citep{chen2025scale}.
As the model size or token budget increases, the absolute perplexity gaps between all methods decrease, although the relative ranking of the optimizers remains largely unchanged.
For the 1.5B model, KL-Shampoo is outperformed by Shampoo$^{\sfrac{1}{2}}$, despite being the best-performing method across the other settings with the exception of batch size 64 for the 320M model.
This motivates the hypothesis that KL-Shampoo requires a larger batch size than Shampoo$^{\sfrac{1}{2}}$.

\paragraph{Patterns in optimal hyperparameters.}
Similar to \citet{orvieto2025search}, we observe that the optimal $\beta_1$ and $\beta_2$ hyperparameters in AdamW are correlated, although their values are not necessarily close to each other.
As the batch size increases, the optimal $\beta_1$ and $\beta_2$ hyperparameters decrease, reproducing the observations in \citet{marek2025smallbatch}.
When using the optimal $\beta_2$ from AdamW for grafting, the $\beta_2$ for updating Shampoo's factor matrices, which does not affect the update magnitude, is typically much smaller. %

In contrast to \citet{semenov2025benchmarking}, we generally do not observe that the matrix-based optimizers benefit from a larger learning rate compared to element-wise methods, with the exception of the 320M, 1$\times$ token budget, batch size 256 setting.

\paragraph{Comparison to prior work.}

Most prior benchmarks omit Shampoo, assuming SOAP to be the superior variant.
However, we find that the closely related EShampoo does not generally outperform Shampoo; see \Cref{tab:eshampoo}.
These results align with \citet{vyas2025improving} and \citet{lin2025understanding}, who show that SOAP and Shampoo perform similarly when updating the preconditioner every iteration, and that KL-Shampoo converges faster than KL-SOAP, respectively.\footnote{KL-SOAP runs Adam in KL-Shampoo's eigenbasis.}

Evidence comparing SOAP and Muon is mixed: \citet{wen2025fantastic} prefer Muon for small token budgets and SOAP for larger ones, while \citet{semenov2025benchmarking} find SOAP superior up to 210M parameters, but outperformed by AdamW and Muon at larger scale.
These discrepancies likely stem from the confounders discussed here.
Similarly to us, \citet{frans2025really} observe that SOAP generally outperforms Muon. %

\begin{table}[h]
\caption{EShampoo in the same setting as \Cref{tab:core-methods}; Llama-320M model, $1 \times$ Chinchilla token budget, and 256 batch size.}
\label{tab:eshampoo}
\centering
\begin{tabular}{Gll}
    \toprule
    Shampoo$^{\sfrac{1}{2}}$ & \multicolumn{2}{c}{EShampoo}  \\
    & no grafting & grafting \\
    \hline  
    $25.26 \pm 0.05$ & $25.72 \pm 0.09$ & $26.07 \pm 0.08$ \\
    \bottomrule
\end{tabular}
\end{table}

\subsection{Accounting for Potential Confounders}

\paragraph{Muon's layer-wise scaling.}

Muon is typically used with some layer-wise scaling instead of grafting, \eg the classic scaling $c(\mG_t) = \sqrt{\max{\left(1, m/n\right)}}$ \citep{jordan2024muon} and the Moonlight scaling $c(\mG_t) = 0.2 \cdot \sqrt{\max(m, n)}$ \citep{liu2025muon} that mimics the RMS norm of Adam.
Neither results in a lower validation perplexity than grafting.
\begin{table}[ht]
\caption{Muon (SVD) with classic and Moonlight scaling on the Llama-320M model, $1 \times$ Chinchilla token budget, and 256 batch size, with the same hyperparameter tuning strategy as in \Cref{tab:core-methods}.}
\label{tab:muon-scalings}
\centering
\begin{tabular}{Gll}
    \toprule
    Grafting & Classic & Moonlight  \\
    \hline  
    $25.68 \pm 0.09$ & $25.96 \pm 0.11$ & $25.70 \pm 0.10$ \\
    \bottomrule
\end{tabular}
\end{table}

\paragraph{Matrix computation accuracy.}

Since we use the SVD instead of Newton-Schulz for Muon and tune $\epsilon$ for Shampoo, a natural question is how much this influences our conclusions \citep{nado2021epsilon,amsel2025polarexpress}.
Note that the $\epsilon$ discussed here only affects the matrix root inverse in \Cref{eq:shampoo} and is distinct from the $\epsilon$ used in Adam grafting.
We compare against Shampoo$^{\sfrac{1}{4}}$ and Shampoo$^{\sfrac{1}{2}}$ using the default $\epsilon = 10^{-12}$, and Muon using the Newton-Schulz iteration with the quintic coefficients $(\alpha, \beta, \gamma) = (3.4445, -4.775, 2.0315)$ suggested in \citet{jordan2024muon}.

\begin{table}[ht]
\caption{Shampoo$^{\sfrac{1}{4}}$/Shampoo$^{\sfrac{1}{2}}$ \emph{with the default choice of $\epsilon = 10^{-12}$} and Muon with the standard \emph{Newton-Schulz} (NS) algorithm on the Llama-320M model and $1 \times$ Chinchilla token budget, with the same hyperparameter tuning strategy as in \Cref{tab:core-methods}.}
\label{tab:muon-ns}
\centering
\begin{tabular}{llll}
    \toprule
    $B$ & Shampoo$^{\sfrac{1}{4}}$ & Shampoo$^{\sfrac{1}{2}}$ & Muon (NS) \\
    \hline  
    64 & $26.50 \pm 0.12$ & $25.31 \pm 0.08$ & $26.31 \pm 0.14$ \\
    256 & $26.13 \pm 0.10$ & $25.46 \pm 0.05$ & $26.08 \pm 0.11$ \\
    \bottomrule
\end{tabular}
\end{table}

Tuning $\epsilon$ yields significant gains, shifting the relative ranking of the optimizers.
For example, Shampoo$^{\sfrac{1}{4}}$ performs worse than Muon, whereas their order flips when tuned.
The optimal values of $\epsilon$ differ substantially between the Shampoo variants; see \Cref{sec:epsilon-role}.

\paragraph{Limitations.}
We only consider the C4 dataset and dense Llama 3 architecture with a fixed sequence length, learning rate warmup and schedule, which \citet{semenov2025benchmarking} found to alter their optimizer rankings.
Also, we did not perform an exhaustive grid search over all hyperparameters, \eg we did not tune weight decay; see \Cref{sec:experiments-appendix}.

\begin{tcolorbox}[
    enhanced,
    colback=white,       %
    colframe=black,      %
    coltitle=black,      %
    title=\textbf{Takeaway \#1},
    sharp corners,       %
    boxrule=0.8pt,       %
    bottom=4mm,
    attach boxed title to top left={xshift=0.5cm, yshift*=-0.8\baselineskip},
    boxed title style={
        frame hidden,    %
        colback=white,   %
        left=1mm, right=1mm %
    }
]
    Shampoo applied to weight matrices can be interpreted as a modified Muon algorithm that \emph{achieves superior token efficiency} compared to Muon after controlling for confounders. %
\end{tcolorbox}

\section{Localizing Shampoo's Benefits}
\label{sec:localizing}

To dissect where Shampoo's benefits come from, we:
\begin{enumerate}[label=\arabic*), nosep, topsep=-1.5ex]
    \item isolate Shampoo's effect on non-matrix parameters since \Cref{eq:shampoo-decomposition} only applies to matrices, and
    \item investigate whether both adaptation matrices contribute to its improvements.
\end{enumerate}

\subsection{Revisiting Shampoo Beyond Weight Matrices}
\label{sec:beyond}

Shampoo is commonly thought of as an approximation to (per-parameter) full-matrix Adam, with preconditioner
\begin{equation}
\label{eq:full-matrix-adam}
    \mA_t = \beta_2 \mA_{t-1} + (1-\beta_2) \vg_t \vg_t^\transpose.
\end{equation}
Specifically, Shampoo's preconditioner has been interpreted as an upper bound in Loewner order to full-matrix AdaGrad \citep{gupta2018shampoo}, or an approximation to full-matrix Adam in Frobenius norm \citep{morwani2025new,eschenhagen2025purifying} or KL divergence \citep{lin2025understanding}. 
This perspective suggests Shampoo is shape-agnostic.
However, Shampoo's decomposition in \Cref{eq:shampoo-decomposition} motivates a re-evaluation of this interpretation.

To investigate Shampoo's effectiveness for non-2D parameters, we train a vision transformer and ConvNeXt V2 on Imagewoof using Shampoo with different reshaping strategies (see \Cref{fig:reshape}).
When a parameter is reshaped into a 1D vector, Shampoo mathematically degenerates into per-parameter full-matrix Adam.
Crucially, we find that full-matrix Adam (via 1D reshaping) performs worse than Shampoo (via 2D reshaping) for both architectures.\footnote{We only reshape parameters until the largest dimension is $> 32,768$, so there are still 2D parameters being preconditioned.}
Furthermore, for 4D convolutional kernels, reshaping to 2D matrices outperforms preconditioning higher-order tensors directly (\ie, maintaining the 4D structure).
This implies that Shampoo's benefits are exclusive to the structure of linear layer weight matrices, consistent with recent findings by \citet{eschenhagen2025purifying} and \citet{xie2025structured}.

\begin{figure}[t]
\begin{center}
    \includegraphics[width=\columnwidth]{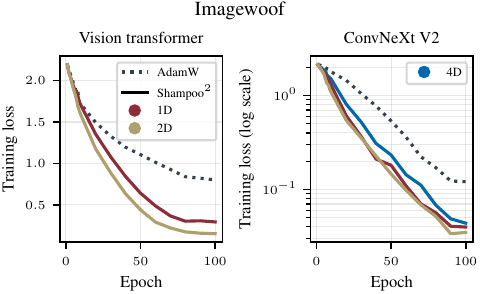}
\end{center}
\caption{Shampoo$^{\sfrac{1}{2}}$ ($p=1/8$ for 4D) with different reshaping strategies. Reshaping to 2D matrices outperforms reshaping to 1D vectors (full-matrix Adam) or preserving 4D tensor structure.}
\label{fig:reshape}
\end{figure}

\begin{table}[ht]
\caption{Shampoo$^{\sfrac{1}{2}}$ applied to different subsets of the parameters on the Llama-320M model, $1 \times$ token budget, and $256$ batch size, and the rest of the setting as in \Cref{tab:core-methods}.}
\label{tab:shampoo-otherparams}
\centering
\begin{tabular}{Gll}
    \toprule
    Shampoo$^{\sfrac{1}{2}}$ & Shampoo$^{\sfrac{1}{2}}$ (1D) & Shampoo$^{\sfrac{1}{2}}$ (embed.) \\
    \hline  
    $25.26 \pm 0.05$ & $25.64 \pm 0.05$ & $25.46 \pm 0.03$ \\
    \bottomrule
\end{tabular}
\end{table}

We further validate this in our language model setting (\Cref{tab:shampoo-otherparams}). 
Additionally applying Shampoo to all 1D parameters or the embeddings and output layers results in worse validation perplexity than applying it exclusively to hidden weight matrices.
This finding supports the common heuristic of excluding embeddings and 1D parameters when using Shampoo, challenging the view of Shampoo as a general-purpose improvement over Adam.

Overall, our observations indicate that Shampoo should not be viewed as an approximation of full-matrix Adam or related matrices (e.g., gradient covariance \citep{yang2008principal,sohldickstein2012natural,ida2017adaptive}, empirical Fisher \citep{kunstner2019limitations}, or Fisher/generalized Gauss-Newton matrices \citep{amari1998natural,martens2014new}).

\begin{tcolorbox}[
    enhanced,
    colback=white,       %
    colframe=black,      %
    coltitle=black,      %
    title=\textbf{Takeaway \#2},
    sharp corners,       %
    boxrule=0.8pt,       %
    bottom=4mm,
    attach boxed title to top left={xshift=0.5cm, yshift*=-0.8\baselineskip},
    boxed title style={
        frame hidden,    %
        colback=white,   %
        left=1mm, right=1mm %
    }
]
Shampoo's benefits can be exclusively attributed to its operation on \emph{weight matrices}. Consequently, Shampoo's preconditioner should likely not be interpreted as an approximation to full-matrix Adam.
\end{tcolorbox}

An alternative shape-agnostic perspective is viewing Shampoo as approximately running Adafactor/Adam in the eigenbasis of Shampoo's preconditioner \citep{vyas2025soap}, but it remains unclear why this basis is superior to others \citep{xie2025adam,maes2025understanding}.
Furthermore, Shampoo may outperform EShampoo; see \Cref{tab:eshampoo}.

\subsection{One-Sided Preconditioning}

Several works argue that two-sided preconditioning used in methods like K-FAC \citep{martens2015optimizing,grosse2016kronecker,martens2018kroneckerfactored,eschenhagen2023kfac} and Shampoo is inferior to one-sided preconditioning:
\citet{benzing2022gradient} argues that K-FAC approximates ``gradient descent on neurons", which corresponds to preconditioning solely with the damped, uncentered covariance of the activations (related to $\mR_t$ in Shampoo; see Appendix B in \citet{anil2020scalable}).
Similarly, \citet{xie2025structured} and \citet{an2025asgo} argue that one-sided Shampoo may be preferable to full-matrix or two-sided versions, citing tighter regret bounds in online convex settings.

\begin{table}[ht]
\caption{One- and two-sided KL-Shampoo on the Llama-320M model, $1 \times$ token budget, and $256$ batch size, and the rest of the setting as in \Cref{tab:core-methods}. Note that restricting KL-Shampoo to one side effectively simplifies into standard one-sided Shampoo$^{\sfrac{1}{2}}$.}
\label{tab:one-sided-shampoo}
\centering
\begin{tabular}{Gll}
    \toprule
    KL-Shampoo & $\mL_t$ Only & $\mR_t$ Only \\
    \hline  
    $24.95 \pm 0.09$ & $26.08 \pm 0.09$ & $25.42 \pm 0.07$ \\
    \bottomrule
\end{tabular}
\end{table}

Preconditioning only with $\mR_t$ performs significantly better than preconditioning only with $\mL_t$, consistent with the interpretation in \citet{benzing2022gradient}, as well as the experimental results in \citet{an2025asgo} and \citet{frans2025really} for SOAP.
However, neither one-sided variant matches the performance of the two-sided preconditioner. %
In fact, preconditioning only with $\mL_t$ performs worse than Muon (SVD).
\begin{tcolorbox}[
    enhanced,
    colback=white,       %
    colframe=black,      %
    coltitle=black,      %
    title=\textbf{Takeaway \#3},
    sharp corners,       %
    boxrule=0.8pt,       %
    bottom=4mm,
    attach boxed title to top left={xshift=0.5cm, yshift*=-0.8\baselineskip},
    boxed title style={
        frame hidden,    %
        colback=white,   %
        left=1mm, right=1mm %
    }
]
Only preconditioning with $\mR_t$ appears superior to $\mL_t$, but one-sided preconditioning fails to match the token efficiency of two-sided preconditioning.
\end{tcolorbox}
Given our findings, understanding Shampoo's two-sided preconditioning of weight matrices is necessary and sufficient to understand Shampoo in general.

\section{Characterizing Adaptation in Shampoo}
\label{sec:adaptation}

As the primary mechanism distinguishing Shampoo from spectral descent (\cf \Cref{eq:shampoo-spectral}), we study the EMAs in Adam and Shampoo's preconditioners. 
They aggregate gradients computed across different \emph{parameter iterates} and, in the stochastic setting, different \emph{mini-batches}, \eg
\begin{equation}
\label{eq:adam_ema_disentangled}
\begin{split}
    \vv_T &= \beta_2 \vv_{T-1} + \left( 1 - \beta_2 \right) \nabla \ell_{\mathcolor{BurntOrange}{\gB_T}}(\mathcolor{BurntOrange}{\vtheta_T})^2. \\
\end{split}
\end{equation}
We argue that adaptation in Shampoo operates along two distinct axes: the \emph{parameter trajectory} (averaging across iterates $\vtheta_t$) and \emph{stochasticity} (averaging across mini-batches $\gB_T$).\footnote{For simplicity, we focus our presentation on Adam's preconditioner, though the arguments extend analogously to Shampoo.}
To our knowledge, \citet{cattaneo2025memory} is the only study explicitly modeling both axes, arguing that this adaptation corresponds to modifying the loss function.

\subsection{Parameter Trajectory} 

Adam's preconditioner (\Cref{eq:adam_ema_disentangled}) can be viewed as approximating the EMA of the expected squared gradient:
\begin{equation}
\begin{split}
    \vv_T &\approx \beta_2 \vv_{T-1} + \left( 1 - \beta_2 \right) \E_{\gB \sim \gD}\left[ \nabla \ell_{\gB} (\vtheta_T)^2 | \vtheta_T \right] \\
    &=: \mathrm{EMA}_{t=1}^{T} \left[ \E_{\gB \sim \gD} \left[ \nabla \ell_{\gB} ( \vtheta_t )^2 \right] \right].
\end{split}
\end{equation}
This formulation represents a \emph{time-average} given the sequence of iterates $\gP = \{\vtheta_t\}_{t = 1}^T$, where the effective window size is governed by $\beta_2$.\footnote{We suppress the dependence on $\gP$ and $\beta_2$ in this notation.}
In the full-batch (deterministic) setting, this description matches Adam's preconditioner.

\begin{figure}[t]
\begin{center}
    \includegraphics[width=\columnwidth]{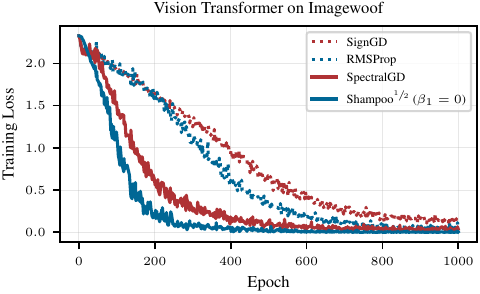}
\end{center}
\caption{\textbf{Full-batch setting.} RMSProp and Shampoo$^{\sfrac{1}{2}}$ only adapt to the parameter trajectory through the EMA in their preconditioner, but outperform SignGD and SpectralGD, respectively.}
\label{fig:full-batch}
\end{figure}

When comparing full-batch SignGD and SpectralGD to RMSProp and Shampoo$^{\sfrac{1}{2}}$ w/o momentum ($\beta_1=0$) on a subset of the Imagewoof dataset, we observe that the adaptive methods converge faster than their non-adaptive counterparts; see \Cref{fig:full-batch}.
This indicates that adapting to the parameter trajectory via time-averaging in the preconditioner is beneficial even in the absence of noise, suggesting that a perspective purely based on stochasticity is insufficient to explain the effectiveness of adaptive methods.
The mechanisms by which time-averaging confers these benefits remain actively researched \citep{cohen2025understanding,xie2025adam,xie2025tale}.

\subsection{Stochasticity}
\label{sec:stochasticity}

While adapting to the parameter trajectory is one important axis for adaptation, multiple common perspectives focus on the \emph{stochasticity} of the gradient estimator, viewing Adam's and Shampoo's preconditioners as approximations of gradient statistics at the current iterate.
For example, it is often assumed that Adam estimates the first and second moment of the gradient:
\begin{equation}
\label{eq:stochasticity-assumption}
    \vm_t \approx \E_{\gB \sim \gD}[\nabla \ell_{\gB} (\vtheta_t)], \hspace{1em} \vv_t \approx \E_{\gB \sim \gD}[\nabla \ell_{\gB} (\vtheta_t)^2].
\end{equation}
In this subsection, we investigate the limitations of two common formulations of adaptation to stochasticity.

\subsubsection{Variance Adaptation}
\label{sec:variance-adaptation}

As shown in \Cref{eq:adam-decomposition}, Adam can be interpreted as Signum with an element-wise variance adaptation scaling.
\citet{orvieto2025search} later expand upon this by showing that when $\beta_1 = \beta_2$, Adam's EMAs correspond to estimates of the mean and variance derived via variational inference, implicitly defining an adaptive trust region for Signum.

While ``variance adaptation'' is often discussed loosely, \citet{balles2017dissecting} propose a precise definition: %
\begin{proposition}[restate=optimaladaptationgeneral, name=Lemma 1 in \citet{balles2017dissecting}]
\label{prop:dissecting_adam}
    Let $\vg \in \R^d$ be a random variable with $\E[\vg] = \nabla \gL$ and $\mathrm{Var}(\vg) = \sigma^2$. Then $\E[|| \bm{\gamma} \odot \vg - \nabla \gL||_2^2]$ is minimized by
    \begin{equation}
    \label{eq:optimal-adaptation-gradient}
        \gamma_i = \frac{\E[g_i]^2}{\E[g_i^2]} = \frac{\nabla \gL_i^2}{\nabla \gL_i^2 + \sigma_i^2} = \frac{1}{1 + \frac{\sigma_i^2}{\nabla \gL_i^2}}
    \end{equation}
    and $\E[|| \bm{\gamma} \odot \mathrm{sign}(\vg) - \mathrm{sign}(\nabla \gL)||_2^2]$ is minimized by
    \begin{equation}
    \label{eq:optimal-sign-adaptation}
        \gamma_i = 2 \; \mathrm{P}(\mathrm{sign}(g_i) = \mathrm{sign}(\nabla \gL_i)) - 1.
    \end{equation}
\end{proposition}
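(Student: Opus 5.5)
Both objectives decouple across coordinates, and each coordinate gives a one-dimensional quadratic in $\gamma_i$. So the whole proof is to separate the sum, minimize each scalar quadratic by setting its derivative to zero, and then rewrite the minimizer in the stated forms. Write $\E[\|\bm{\gamma}\odot\vg - \nabla\gL\|_2^2] = \sum_{i=1}^d \E[(\gamma_i g_i - \nabla\gL_i)^2]$ by linearity of expectation. Since $\gamma_i$ appears only in the $i$-th summand, the joint minimizer is obtained coordinatewise.

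For the first claim, expand $f_i(\gamma_i) = \gamma_i^2 \E[g_i^2] - 2\gamma_i \nabla\gL_i \E[g_i] + \nabla\gL_i^2$ and use $\E[g_i]=\nabla\gL_i$. This is a convex quadratic whenever $\E[g_i^2]>0$, so the first-order condition gives $\gamma_i = \E[g_i]^2/\E[g_i^2]$. Then substitute $\E[g_i^2] = \E[g_i]^2 + \sigma_i^2$ and divide numerator and denominator by $\nabla\gL_i^2$ to get the remaining two equalities. This step needs $\nabla\gL_i \neq 0$. If $\nabla\gL_i=0$ but $\sigma_i^2>0$, the formula still gives $\gamma_i=0$, which I will note.

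For the second claim, let $s_i = \sign(g_i)$ and $t_i = \sign(\nabla\gL_i)$, so the objective is $\sum_i \E[(\gamma_i s_i - t_i)^2]$. Assuming $g_i \neq 0$ and $\nabla\gL_i\neq 0$ almost surely, we have $s_i^2 = t_i^2 = 1$. Each summand is then $\gamma_i^2 - 2\gamma_i \E[s_i t_i] + 1$, which is minimized at $\gamma_i = \E[s_i t_i]$. Because $s_i t_i = +1$ exactly when the signs agree and $-1$ otherwise, $\E[s_i t_i] = \mathrm{P}(s_i=t_i) - (1-\mathrm{P}(s_i=t_i)) = 2\mathrm{P}(s_i=t_i)-1$.

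There is no real obstacle; the main care is in stating the degenerate cases. First, the variance hypothesis has to be read coordinatewise, $\mathrm{Var}(g_i)=\sigma_i^2$. Second, the sign case needs nonzero values almost surely so that $s_i^2=t_i^2=1$. If $g_i$ can be zero with positive probability, the minimizer becomes $\E[s_i t_i]/\E[s_i^2]$ instead, and I would note this as a remark rather than handle it in full.
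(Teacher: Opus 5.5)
Your proof is correct and takes essentially the same route as the source: the paper does not reprove this statement but defers to Appendix B.3 of \citet{balles2017dissecting}, whose argument is the same coordinatewise separation followed by minimizing a scalar quadratic in each $\gamma_i$ (the technique the paper also uses for \Cref{prop:alternative_dissecting_adam}). Your treatment of the degenerate cases is a useful addition that the stated formulas tacitly assume away: $\nabla \gL_i = 0$ in both parts, and $g_i = 0$ with positive probability in the sign part, where the minimizer is $\E[s_i t_i]/\E[s_i^2]$ instead.
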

Alternatively, one could minimize an expected distance between the element-wise scaled stochastic gradient and the sign of the deterministic gradient directly:
\begin{proposition}[restate=optimaladaptationsign, name=]
\label{prop:alternative_dissecting_adam}
    In the setting from \Cref{prop:dissecting_adam}, $\E \left[ || \bm{\gamma} \odot \vg - \mathrm{sign}\left( \nabla \gL \right) ||_2^2 \right]$ is minimized by
    \begin{equation}
    \label{eq:optimal-variance-adaptation-sign}
       \gamma_i = \frac{\nabla \gL_i \mathrm{sign}\left( \nabla \gL_i \right) }{\E \left[ g_i^2 \right]} = \frac{|\nabla \gL_i|}{\E \left[ g_i^2 \right]}.
    \end{equation}
\end{proposition}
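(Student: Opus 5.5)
The objective separates across coordinates, so I would minimize it one coordinate at a time. Expanding the squared norm gives
\begin{equation*}
    \E \left[ || \bm{\gamma} \odot \vg - \mathrm{sign}\left( \nabla \gL \right) ||_2^2 \right] = \sum_{i=1}^d \E\left[ \left( \gamma_i g_i - \mathrm{sign}(\nabla \gL_i) \right)^2 \right].
\end{equation*}
Since $\gamma_i$ appears only in the $i$-th summand, minimizing the total reduces to $d$ independent scalar problems.

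For a fixed $i$, I would expand the square and use linearity of expectation, with $\E[g_i] = \nabla \gL_i$ from the setting of \Cref{prop:dissecting_adam}:
\begin{equation*}
    f_i(\gamma_i) = \gamma_i^2 \E[g_i^2] - 2 \gamma_i \nabla \gL_i \, \mathrm{sign}(\nabla \gL_i) + \mathrm{sign}(\nabla \gL_i)^2 .
\end{equation*}
This is a convex quadratic in $\gamma_i$ whenever $\E[g_i^2] > 0$. Note that $\E[g_i^2] = \nabla \gL_i^2 + \sigma_i^2$ is finite because the variance is finite. Setting $f_i'(\gamma_i) = 0$ gives the unique minimizer
\begin{equation*}
    \gamma_i = \frac{\nabla \gL_i \, \mathrm{sign}(\nabla \gL_i)}{\E[g_i^2]} .
\end{equation*}
The identity $x \, \mathrm{sign}(x) = |x|$ then yields the second form in \Cref{eq:optimal-variance-adaptation-sign}.

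The only real subtlety is the degenerate case $\E[g_i^2] = 0$. Then $g_i = 0$ almost surely and $\nabla \gL_i = 0$, so the objective does not depend on $\gamma_i$. Any $\gamma_i$ is optimal, and the formula should be read with the convention $0/0 := 0$. I would state this non-degeneracy assumption explicitly, as is implicit in \Cref{prop:dissecting_adam}.

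Finally, I would remark that $\gamma_i = \frac{|\nabla \gL_i|}{\nabla \gL_i^2 + \sigma_i^2}$ is the variance-adaptation factor $\frac{\nabla \gL_i^2}{\nabla \gL_i^2 + \sigma_i^2}$ from \Cref{eq:optimal-adaptation-gradient} divided by $|\nabla \gL_i|$. This connects the result to \Cref{prop:dissecting_adam}, though the connection is not needed for the proof.
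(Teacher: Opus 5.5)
Your proposal is correct and follows the paper's proof: split the objective into coordinates, use $\E[g_i] = \nabla \gL_i$ to get a quadratic in each $\gamma_i$, and set the derivative to zero. The one difference is that you justify the minimizer by convexity and treat the degenerate case $\E[g_i^2] = 0$ explicitly, whereas the paper simply assumes $\E[g_i^2] > 0$ for all $i$.
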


Naturally, we can extend this analysis to Shampoo by considering matrix preconditioning instead of element-wise scaling.
For simplicity, we focus on the one-sided case where we fix $\mB = \mI$, though the derivation generalizes to the two-sided case; see \Cref{sec:notes-variance-adaptation}.
\begin{proposition}[restate=optimaladaptationspectral, name=]
\label{prop:one-sided-optimal-adaptation-spectral}
    Let $\mG \in \R^{m \times n}$ be a random variable with $\E[\mG] = \nabla \gL$ and finite covariance. Let $\nabla \gL = \mU \mSigma \mV^\transpose$ be the SVD of the gradient. When $\mB = \mathbf{I}$, $\E[||\mA \mG \mB - \mU \mV^\transpose||_F^2]$ is minimized by
    \begin{equation}
    \label{eq:optimal-A-spectral-update}
    \begin{split}
        \mA &= \left( \nabla \gL \nabla \gL^\transpose \right)^{ \frac{1}{2}} \E\left[ \mG \mG^\transpose \right]^{-1}. \\
    \end{split}
    \end{equation}
\end{proposition}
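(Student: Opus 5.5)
The objective with $\mB = \mI$ is $f(\mA) = \E[\|\mA\mG - \mU\mV^\transpose\|_F^2]$, which is a convex quadratic in $\mA \in \R^{m\times m}$. I will expand it, set the gradient to zero, and simplify the resulting stationary point using the SVD of $\nabla\gL$.

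\textbf{Step 1: expand the objective.} By linearity of expectation,
\begin{equation*}
f(\mA) = \Tr\left(\mA\,\E[\mG\mG^\transpose]\mA^\transpose\right) - 2\Tr\left(\mA\,\E[\mG]\,\mV\mU^\transpose\right) + \|\mU\mV^\transpose\|_F^2 .
\end{equation*}
The finite-covariance assumption guarantees that $\mC := \E[\mG\mG^\transpose]$ exists. Since $\E[\mG] = \nabla\gL$, the middle term becomes $-2\Tr(\mA\nabla\gL\,\mV\mU^\transpose)$.

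\textbf{Step 2: solve the first-order condition.} Differentiating in $\mA$ gives
\begin{equation*}
\nabla_{\mA} f = 2\mA\mC - 2\,\mU\mV^\transpose\nabla\gL^\transpose .
\end{equation*}
$\mC$ is positive semidefinite, so $f$ is convex. Assuming $\mC$ is invertible (the statement implicitly uses $\mC^{-1}$), $f$ is strictly convex, and the stationary point
\begin{equation*}
\mA = \mU\mV^\transpose\nabla\gL^\transpose\,\mC^{-1}
\end{equation*}
is the unique global minimizer.

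\textbf{Step 3: identify the prefactor.} With the reduced SVD $\nabla\gL = \mU\mSigma\mV^\transpose$,
\begin{equation*}
\mU\mV^\transpose\nabla\gL^\transpose = \mU\mV^\transpose\mV\mSigma\mU^\transpose = \mU\mSigma\mU^\transpose ,
\end{equation*}
using $\mV^\transpose\mV = \mI$. The matrix $\mU\mSigma\mU^\transpose$ is symmetric PSD, and $(\mU\mSigma\mU^\transpose)^2 = \mU\mSigma^2\mU^\transpose = \nabla\gL\nabla\gL^\transpose$ because $\mU^\transpose\mU = \mI$. By uniqueness of the PSD square root, $\mU\mSigma\mU^\transpose = (\nabla\gL\nabla\gL^\transpose)^{1/2}$. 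This gives \Cref{eq:optimal-A-spectral-update}.

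\textbf{Main obstacle.} The only delicate point is invertibility of $\mC$. Since $\mC = \nabla\gL\nabla\gL^\transpose + \E[(\mG-\nabla\gL)(\mG-\nabla\gL)^\transpose]$, it is invertible whenever the noise covariance is nondegenerate in the row space. If $\mC$ is singular, I would replace $\mC^{-1}$ with the pseudoinverse $\mC^{+}$, giving the minimum-norm minimizer. This works because $\mU\mSigma\mU^\transpose$ has range inside $\mathrm{range}(\nabla\gL) \subseteq \mathrm{range}(\mC)$, so the normal equation $\mA\mC = \mU\mSigma\mU^\transpose$ remains solvable. I would note this as a remark rather than part of the main argument.
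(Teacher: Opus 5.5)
Your proof is correct and follows essentially the same route as the paper: expand $\E[\|\mA\mG - \mU\mV^\transpose\|_F^2]$ as a quadratic in $\mA$, set the derivative $2\mA\E[\mG\mG^\transpose] - 2(\nabla\gL\nabla\gL^\transpose)^{1/2}$ to zero, and invert assuming $\E[\mG\mG^\transpose]$ has full rank. The only cosmetic difference is that the paper rewrites the target as $(\nabla\gL\nabla\gL^\transpose)^{\frac{\dagger}{2}}\nabla\gL$ up front, while you identify $\mU\mV^\transpose\nabla\gL^\transpose = \mU\mSigma\mU^\transpose = (\nabla\gL\nabla\gL^\transpose)^{1/2}$ directly from the SVD; your convexity argument for global optimality and your pseudoinverse remark for singular $\E[\mG\mG^\transpose]$ are correct additions that the paper omits.
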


However, in all cases, we encounter two critical discrepancies between this theoretical formalization and Adam's practical implementation, and analogously for Shampoo.

\textbf{Dependence on the oracle gradient.} \Cref{eq:optimal-sign-adaptation,eq:optimal-variance-adaptation-sign} both require access to the deterministic gradient $\nabla \gL$. 
\citet{balles2017dissecting} propose using the EMA $\vm$ as an estimator of $\nabla \gL$ for variance adaptation, but it is unclear why this is preferable to using $\sign(\vm) \approx \sign(\nabla \gL)$.

\textbf{A remaining gap to the update.} Even if we accept the substitution $\vm$ for $\nabla \gL$, \citet{balles2017dissecting} only show that under Gaussian noise, the optimal sign adaptation factor in \Cref{eq:optimal-sign-adaptation} empirically tracks the idealized variance adaptation factor.
When we instead consider the optimal solution in \Cref{eq:optimal-variance-adaptation-sign} and adapt $\vm$ instead of $\vg$, we have
\begin{equation}
\label{eq:optimal-variance-approx}
    \gamma \odot \vm = \mathcolor{MidnightBlue}{\left( \vm^2 \oslash \E\left[ \vg^2 \right] \right)} \odot \mathcolor{Maroon}{\mathrm{sign}\left( \vm \right)}.
\end{equation}
Contrast this with the idealized Adam update,
\begin{equation}
\label{eq:adam-in-terms-of-optimal}
\begin{split}
    \gamma^\mathrm{Adam} \odot \vm
    &= \mathcolor{MidnightBlue}{\left( \vm^2 \oslash \E\left[ \vg^2 \right] \right)}^{\textcolor{BurntOrange}{\frac{1}{2}}} \odot \mathcolor{Maroon}{\mathrm{sign}\left( \vm \right)},
\end{split}
\end{equation}
where we have an additional \textcolor{BurntOrange}{square root} of the scaling.
Regardless of the definition, we cannot recover Shampoo and Adam's update from this form of variance adaptation.

\subsubsection{Whitening}
\label{sec:whitening}

Whitening is another approach to handling stochasticity that is commonly cited as motivation for Shampoo \citep{vyas2025improving,frans2025really}.
In stochastic optimization, it is typically defined as a linear transform that maps the stochastic gradient to a random vector with isotropic covariance \citep{yang2008principal} (\cf \Cref{def:whitening-vector}).
Our experiments in \Cref{sec:beyond} suggest that this standard  whitening is not a meaningful interpretation of Shampoo.
However, Shampoo aligns closely with a form of \emph{matrix whitening}:
\begin{definition}[restate=matrixwhitening, name=]
\label{def:whitening-matrix}
    Let $\mG \in \R^{m \times n}$ be a random matrix with $\E\left[ \mG \right] = \nabla \gL$ and finite row- and column-wise covariance:
    \begin{equation*}
        \begin{aligned}
            \mathrm{Cov}_\mathrm{row}\left( \mG \right) & = \E\left[ (\mG - \nabla \gL) (\mG - \nabla \gL)^\transpose \right] \\
            \mathrm{Cov}_\mathrm{col}\left( \mG \right) & = \E\left[ (\mG - \nabla \gL)^\transpose (\mG - \nabla \gL)  \right].
        \end{aligned}
    \end{equation*}
    We call symmetric positive-definite matrices $\mA \in \R^{m \times m}$ and $\mB \in \R^{n \times n}$ \emph{matrix whitening matrices} if $\mathrm{Cov}_\mathrm{row}\left( \mA \mG \mB \right) = \mI_m$ and $\mathrm{Cov}_\mathrm{col}\left( \mA \mG \mB \right) = \mI_n$.
\end{definition}

\begin{corollary}[restate=whitening, name=]
\label{cor:full-whitening}
    The optimality condition for whitening $\mG$ according to \Cref{def:whitening-matrix} is
    \begin{equation}
        \mA = \mathrm{Cov}_\mathrm{row}\left( \mG \mB \right)^{-\frac{1}{2}}, \hspace{1em} \mB = \mathrm{Cov}_\mathrm{col}\left( \mA \mG \right)^{-\frac{1}{2}}.
    \end{equation}
\end{corollary}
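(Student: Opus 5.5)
Since $\mA$ and $\mB$ are deterministic, the row and column covariances transform linearly. Writing $\mD = \mG - \nabla \gL$, we have $\E[\mA\mG\mB] = \mA \nabla\gL \mB$, so $\mA\mG\mB - \E[\mA\mG\mB] = \mA \mD \mB$. This gives
\begin{equation*}
    \mathrm{Cov}_\mathrm{row}(\mA\mG\mB) = \mA\, \E\left[\mD \mB \mB^\transpose \mD^\transpose\right] \mA^\transpose = \mA\, \mathrm{Cov}_\mathrm{row}(\mG\mB)\, \mA,
\end{equation*}
using $\mA = \mA^\transpose$ and the fact that the centered version of $\mG\mB$ is $\mD\mB$. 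By the same computation, $\mathrm{Cov}_\mathrm{col}(\mA\mG\mB) = \mB\, \mathrm{Cov}_\mathrm{col}(\mA\mG)\, \mB$.

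The first whitening condition of \Cref{def:whitening-matrix} therefore reads $\mA \mC \mA = \mI_m$, where $\mC := \mathrm{Cov}_\mathrm{row}(\mG\mB)$ is symmetric positive semidefinite. Because $\mA$ is positive definite, $\mC = \mA^{-2}$, so $\mC$ is itself positive definite. Taking the unique SPD square root gives $\mA^{-1} = \mC^{1/2}$, that is, $\mA = \mathrm{Cov}_\mathrm{row}(\mG\mB)^{-1/2}$. The step that needs care here is uniqueness. $\mA^{-1}$ is an SPD matrix whose square is $\mC$, and the SPD square root of an SPD matrix is unique, so $\mA^{-1} = \mC^{1/2}$. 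This is exactly where the SPD requirement on the whitening matrices is used; without it, orthogonal factors could be inserted. The identical argument applied to the column condition gives $\mB = \mathrm{Cov}_\mathrm{col}(\mA\mG)^{-1/2}$.

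For the converse, suppose $\mA$ and $\mB$ are SPD and satisfy both fixed-point equations. Then $\mA \mC \mA = \mC^{-1/2}\mC\,\mC^{-1/2} = \mI_m$, and likewise on the column side, so $(\mA,\mB)$ are matrix whitening matrices. The two conditions are thus equivalent to the definition, which is the claimed optimality (fixed-point) characterization.

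The argument involves no real obstacle beyond the square-root uniqueness step. The corollary's phrasing as "optimality condition" is meant to parallel \Cref{eq:kl-optimality}; existence of a solution is not claimed and need not be shown.
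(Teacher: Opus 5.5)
Your proposal is correct and takes essentially the same approach as the paper's proof. Both pull the deterministic $\mA,\mB$ out of the covariances to obtain $\mA\,\mathrm{Cov}_\mathrm{row}(\mG\mB)\,\mA = \mI_m$ and $\mB\,\mathrm{Cov}_\mathrm{col}(\mA\mG)\,\mB = \mI_n$, then solve each equation with the inverse matrix square root. Your version is more careful than the paper's, which simply assumes the covariances are full rank: you derive positive definiteness of the covariance from the invertibility of $\mA$, and you make explicit that uniqueness of the SPD square root is what pins down $\mA$ and $\mB$.
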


When replacing the centered with the uncentered covariance, \Cref{cor:full-whitening} is equivalent to the optimality condition proposed in Section 3.1 of \citet{vyas2025improving} and to the KL-divergence minimization problem in \Cref{eq:kl-divergence} \citep{lin2025understanding}.
KL-Shampoo can thus be interpreted as a practical algorithm for inexactly solving this optimality condition.

However, the matrix whitening formalized in \Cref{def:whitening-matrix} raises two critical questions: 1) why is row- and column-wise whitening superior to full whitening of the gradient vector, and 2) why should the uncentered covariance be used instead of the centered covariance?
While the latter question has been investigated for Adam variants by using a diagonal approximation of the centered covariance \citep{graves2014generating,ida2017adaptive,zhuang2020adabelief}, it remains unclear whether centered approaches are beneficial \citep{schmidt2021descending}.
See \Cref{sec:centered-shampoo-appendix} for a centered variant of Shampoo.

These questions suggest that an interpretation with similar structural characteristics, but less conceptual baggage, could better describe Shampoo's inner workings.

\subsection{Adapting Spectral Descent}
\label{sec:description}

To unify Shampoo's adaptation to stochasticity and the parameter trajectory with spectral descent's strict semi-orthogonality constraint, we define:
\begin{definition}[restate=taoe, name=Time-averaged orthogonality in expectation]
\label{def:time-averaged-orthogonality}
    Let $\gG = \{ \mG_t \}_{t=1}^T$ be a sequence of random variables $\mG_t: \Omega \rightarrow \R^{m \times n}$ with sample space $\Omega$. We call $\gG$ \emph{time-averaged orthogonal in expectation} if we have
    \begin{equation}
    \mathcolor{MidnightBlue}{\mathrm{EMA}_{t=1}^T} \left({\mathcolor{MidnightBlue}{\E}[\mathcolor{Maroon}{\mG_t \mG_t^\transpose}]} \right) = \mathcolor{Maroon}{\mathbf{I}_m}, \; \mathcolor{MidnightBlue}{\mathrm{EMA}_{t=1}^T} \left( {\mathcolor{MidnightBlue}{\E}[\mathcolor{Maroon}{\mG_t^\transpose \mG_t}]} \right) = \mathcolor{Maroon}{\mathbf{I}_n}.
    \end{equation}
\end{definition}
\begin{corollary}[restate=idealklshampoo, name=Idealized KL-Shampoo]
\label{cor:ideal-kl-shampoo}
    Let $\gZ = \{ \mZ_t \}_{t=1}^T$ with $\mZ_t = \mA \mG_t \mB$ and with symmetric positive definite matrices $\mA$ and $\mB$. For $\gZ$ to be time-averaged orthogonal in expectation, $\mA$ and $\mB$ have to fulfill the optimality condition
    \begin{equation}
    \label{eq:idealized-kl-shampoo}
    \begin{aligned}
        \mA &= \mathrm{EMA}_{t=1}^T \left(\E\left[ \mG_t \mB^{2} \mG_t^\transpose \right] \right)^{-\frac{1}{2}}, \\
        \mB &= \mathrm{EMA}_{t=1}^T \left( \E\left[ \mG_t^\transpose \mA^{2} \mG_t \right] \right)^{-\frac{1}{2}}.
    \end{aligned}
    \end{equation}
\end{corollary}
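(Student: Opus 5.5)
The argument is a direct computation from \Cref{def:time-averaged-orthogonality}: write out both defining equations for $\mZ_t = \mA \mG_t \mB$, pull the fixed matrices $\mA$ and $\mB$ out of the expectation and the EMA, and solve each equation for one factor using the positive-definite square root. This mirrors how \Cref{cor:full-whitening} follows from \Cref{def:whitening-matrix}, with the centered covariance replaced by the time-averaged uncentered second moment.

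\textbf{Left factor.} Since $\mA$ and $\mB$ are symmetric, $\mZ_t \mZ_t^\transpose = \mA \mG_t \mB^2 \mG_t^\transpose \mA$. Both $\E$ and $\mathrm{EMA}_{t=1}^T$ are linear, the EMA being a fixed weighted sum over $t$. Because $\mA$ and $\mB$ do not depend on $t$ or on $\omega \in \Omega$, they commute with both operations. The first condition therefore reads $\mA \mC_\mB \mA = \mI_m$, where
\[
\mC_\mB := \mathrm{EMA}_{t=1}^T\left(\E\left[\mG_t \mB^2 \mG_t^\transpose\right]\right).
\]
The matrix $\mC_\mB$ is symmetric positive semidefinite, being a nonnegative combination of expectations of PSD matrices. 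Since $\mA$ is invertible, we get $\mC_\mB = \mA^{-2}$. So $\mC_\mB$ is in fact positive definite, and $\mA^2 = \mC_\mB^{-1}$.

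\textbf{Square-root step.} This is the only non-mechanical point. A symmetric positive definite matrix has a \emph{unique} SPD square root. Since $\mA$ is SPD and $\mA^2 = \mC_\mB^{-1}$, it follows that $\mA = \mC_\mB^{-1/2}$, which is the first line of \eqref{eq:idealized-kl-shampoo}. It is the SPD hypothesis on $\mA$ that excludes the other solutions of $\mA \mC_\mB \mA = \mI_m$, such as $\mA = \mQ \mC_\mB^{-1/2}$ with $\mQ$ orthogonal, which need not be symmetric or positive definite.

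\textbf{Right factor.} By the symmetric computation, $\mZ_t^\transpose \mZ_t = \mB \mG_t^\transpose \mA^2 \mG_t \mB$. The second condition becomes $\mB \mD_\mA \mB = \mI_n$, where
\[
\mD_\mA := \mathrm{EMA}_{t=1}^T\left(\E\left[\mG_t^\transpose \mA^2 \mG_t\right]\right).
\]
The same square-root argument gives $\mB = \mD_\mA^{-1/2}$.

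Both conditions are necessary, which is all the corollary claims. The only other point to verify is that the expectations exist, which needs finite second moments of the $\mG_t$, implicit as in \Cref{def:whitening-matrix}. I would also remark that the argument is reversible: any SPD pair satisfying \eqref{eq:idealized-kl-shampoo} makes $\gZ$ time-averaged orthogonal in expectation. The conditions are therefore also sufficient, though existence of a solution pair is not claimed.
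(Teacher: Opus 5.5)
Your proposal is correct and follows the same route as the paper's proof. You expand $\mZ_t \mZ_t^\transpose$ and $\mZ_t^\transpose \mZ_t$, pull the constant $\mA$ and $\mB$ out of the expectation and the EMA, and solve $\mA\,\mathrm{EMA}(\cdot)\,\mA = \mathbf{I}_m$ and its right-hand analogue by taking inverse square roots. Your two additions are valid small sharpenings: you invoke uniqueness of the SPD square root to justify the final step, and you note that the EMA matrices are automatically positive definite, so the full-rank assumption the paper states explicitly is already implied by the condition.
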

Solving \Cref{eq:idealized-kl-shampoo} exactly is infeasible in practice due to the coupling of $\mA$ and $\mB$, unlike the analogous condition for RMSProp; see \Cref{sec:shampoo-adapted-optimality}.
It corresponds to a time-average of the type of whitening discussed in \Cref{sec:whitening}, but relates it to the orthogonality constraint in spectral descent ($\mathcolor{Maroon}{\bullet}$), adapted to stochasticity and the parameter trajectory ($\mathcolor{MidnightBlue}{\bullet}$).
In contrast to notions of variance adaptation and whitening (\cf \Cref{sec:stochasticity}), this perspective offers a more complete description of the actual algorithm, consistent with our empirical results.

Without the time-average and stochasticity, \Cref{def:time-averaged-orthogonality} reduces to standard semi-orthogonality.
However, it is not obvious how KL-Shampoo could reduce to spectral descent.
\begin{proposition}[restate=instantklshampoo, name=``Instantaneous" KL-Shampoo converges to spectral descent.]
\label{prop:instant-kl-shampoo}
    Consider the iteration in \Cref{eq:kl-shampoo} initialized with $\mL_0 = \mR_0 = c\mathbf{I}$ for any $c \in \R^+$, and a fixed $\mG$ with reduced SVD $\mG = \mU \mSigma \mV^\transpose$.
    Then we have
    \begin{equation}
        \mL_\infty^{\frac{\dagger}{2}} \mG \mR_\infty^{\frac{\dagger}{2}} = \mU \mV^\transpose,
    \end{equation}
    where $\mL_\infty : = \lim_{t \rightarrow \infty} \mL_t$ and $\mR_\infty : = \lim_{t \rightarrow \infty} \mR_t$.
\end{proposition}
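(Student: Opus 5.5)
The plan is to diagonalize the coupled iteration in the singular bases of $\mG$, which reduces it to a scalar recursion for each singular value. Throughout I assume $\beta_2 \in (0,1)$. For $\beta_2 = 0$ the recursion is $x_t = \sigma^2/x_{t-1}$, which oscillates between $c$ and $\sigma^2/c$ and has no limit unless $c = \sigma$, so the statement implicitly requires $\beta_2 > 0$.

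\textbf{Step 1: invariant form.} Complete $\mU$ and $\mV$ to orthonormal bases $[\mU, \mU_\perp]$ and $[\mV, \mV_\perp]$. I will show by induction on $t$ that
\begin{equation*}
\mL_t = \mU \diag(\bm{l}_t) \mU^\transpose + c\beta_2^t \mU_\perp \mU_\perp^\transpose, \qquad \mR_t = \mV \diag(\bm{r}_t) \mV^\transpose + c\beta_2^t \mV_\perp \mV_\perp^\transpose,
\end{equation*}
with $\bm{l}_t, \bm{r}_t > 0$. The base case $t=0$ holds because $\mL_0 = \mR_0 = c\mathbf{I}$. For the inductive step, $\mR_{t-1}^{-1}$ has the same eigenvectors as $\mR_{t-1}$, so
\begin{equation*}
\mG \mR_{t-1}^{-1} \mG^\transpose = \mU \mSigma \mV^\transpose \mR_{t-1}^{-1} \mV \mSigma \mU^\transpose = \mU \diag\left(\sigma_i^2 / r_{t-1,i}\right) \mU^\transpose .
\end{equation*}
The same argument applies to the right factor. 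This gives the componentwise recursions
\begin{equation*}
l_{t,i} = \beta_2 l_{t-1,i} + (1-\beta_2)\frac{\sigma_i^2}{r_{t-1,i}}, \qquad r_{t,i} = \beta_2 r_{t-1,i} + (1-\beta_2)\frac{\sigma_i^2}{l_{t-1,i}},
\end{equation*}
while the complement parts are only multiplied by $\beta_2$. Since $l_{0,i} = r_{0,i} = c$, symmetry gives $l_{t,i} = r_{t,i} =: x_{t,i}$ for all $t$. Each $x_{t,i}$ therefore obeys the single recursion $x_t = \beta_2 x_{t-1} + (1-\beta_2)\sigma^2/x_{t-1}$.

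\textbf{Step 2: scalar convergence.} Rescale by $y_t = x_t/\sigma$, so that $y_t = f(y_{t-1})$ with $f(y) = \beta_2 y + (1-\beta_2)/y$. The unique positive fixed point of $f$ is $y = 1$, and $f'(1) = 2\beta_2 - 1$ satisfies $|f'(1)| < 1$. Global convergence from an arbitrary $y_0 = c/\sigma$ is the main obstacle, because the contraction factor degenerates near $0$.

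The identity $f(y) - 1 = (y-1)\left(\beta_2 - (1-\beta_2)/y\right)$ is the tool for this. By AM--GM, every iterate after the first lies in $[2\sqrt{\beta_2(1-\beta_2)}, \infty)$. On $[1,\infty)$ the factor $|\beta_2 - (1-\beta_2)/y|$ is bounded by $\max(\beta_2, |2\beta_2-1|) < 1$. If this direct bound fails on the part of the interval below $1$, I would fall back on the two-step map $f \circ f$. I would show that $f \circ f$ is monotone and maps a compact interval around $1$ into itself with $1$ as its only fixed point there. Alternatively, I would show that $|\log y_t|$ is eventually strictly decreasing. Either route gives $y_t \to 1$, i.e.\ $x_{t,i} \to \sigma_i$.

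\textbf{Step 3: assemble the limit.} Since $\beta_2 < 1$, the complement part $c\beta_2^t$ tends to $0$. Hence
\begin{equation*}
\mL_\infty = \mU \mSigma \mU^\transpose, \qquad \mR_\infty = \mV \mSigma \mV^\transpose .
\end{equation*}
Their pseudo-inverse square roots are $\mU \mSigma^{-1/2} \mU^\transpose$ and $\mV \mSigma^{-1/2} \mV^\transpose$. Multiplying out,
\begin{equation*}
\mL_\infty^{\frac{\dagger}{2}} \mG \mR_\infty^{\frac{\dagger}{2}} = \mU \mSigma^{-1/2} \mU^\transpose \, \mU \mSigma \mV^\transpose \, \mV \mSigma^{-1/2} \mV^\transpose = \mU \mV^\transpose ,
\end{equation*}
which is the claim. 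The pseudo-inverse is what handles rank-deficient $\mG$: the vanishing complement directions are simply discarded.
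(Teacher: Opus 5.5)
Your proposal is correct and follows the same approach as the paper's proof: both rotate into the singular bases of $\mG$, reduce the coupled iteration to the scalar recursion $x_t = \beta_2 x_{t-1} + (1-\beta_2)\sigma^2/x_{t-1}$ with fixed point $\sigma$, and assemble $\mL_\infty = (\mG\mG^\transpose)^{1/2}$, $\mR_\infty = (\mG^\transpose\mG)^{1/2}$. You are more careful than the paper on two points it glosses over, namely the $c\beta_2^t$ component on the orthogonal complements and global rather than merely local convergence (the paper only checks $|f'(\sigma)| = |2\beta_2-1| < 1$); your Step 2 is only sketched, but your $|\log y_t|$ route completes it at once, because $f(y)$ is a strict convex combination of $y$ and $1/y$, so $|\log f(y)| < |\log y|$ for all $y \neq 1$, and a limit-point argument then forces $y_t \to 1$.
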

In this specific setting, KL-Shampoo recovers spectral descent analogously to \Cref{eq:shampoo-spectral}, despite using $p=1/2$.
As a result, one potential hypothesis for why Shampoo$^{\sfrac{1}{2}}$ outperforms Shampoo$^{\sfrac{1}{4}}$ is that Shampoo$^{\sfrac{1}{2}}$ is a tighter approximation to KL-Shampoo than Shampoo$^{\sfrac{1}{4}}$.

\begin{tcolorbox}[
    enhanced,
    colback=white,       %
    colframe=black,      %
    coltitle=black,      %
    title=\textbf{Takeaway \#4},
    sharp corners,       %
    boxrule=0.8pt,       %
    bottom=4mm,
    attach boxed title to top left={xshift=0.5cm, yshift*=-0.8\baselineskip},
    boxed title style={
        frame hidden,    %
        colback=white,   %
        left=1mm, right=1mm %
    }
]
Just as spectral descent enforces semi-orthogonal updates, KL-Shampoo enforces \emph{time-averaged semi-orthogonal updates in expectation}.
``Instantaneous" KL-Shampoo converges to spectral descent.
\end{tcolorbox}

\section{Discussion and Conclusion}
\label{sec:conclusion}
While this work has characterized how Shampoo adapts to stochasticity and the parameter trajectory, disentangling the effects of this theoretically and empirically remains underexplored, even for Adam.
Several lines of work appear relevant, including central flows \citep{cohen2025understanding}, adaptive smoothness \citep{xie2025adam,xie2025tale}, implicit loss modification \citep{cattaneo2025memory}, and Bayesian filtering \citep{aitchison2020bayesian}.
Concretely, one could explore using $L^p$ norms and time-averaging to build towards an adapted modular duality theory (\cf \Cref{sec:adapting-descent}).

Besides adaptation, another fundamental gap remains: why does optimization in the $S_\infty$ geometry (via SpectralGD) outperform $\ell_\infty$ geometry (via SignGD) and $\ell_2$ geometry (via GD) in deep learning \citep{balles2020geometry,su2025isotropic,davis2025spectral}?
While recent work attributes this partly to robustness to heavy-tailed class imbalance \citep{kunstner2024heavytailed,kunstner2025scalinglaws,wang2025muon}, its benefits seem to extend beyond this setting.

Practically, broad adoption requires reducing Shampoo's memory, computational, communication, and implementation overhead.
Beyond efficiency, it is critical to understand the distinct training dynamics and learned solutions of matrix optimizers.
For instance, does Shampoo suffer from the same large-scale stability issues as Muon \citep{kimiteam2025kimik2}?
How do Shampoo and Muon affect downstream applications such as continual learning and quantization \citep{pascanu2025optimizers,he2024understanding,vlassis2025outliers}?

\section*{Acknowledgements}

We thank Gavin (Jialun) Zhang for his in-depth feedback on our initial manuscript and Hiroki Naganuma for his work on the early experiment infrastructure.
We also thank Jeremy Cohen, Felix Dangel, and Wu Lin for various insightful discussions.
We appreciate the consistent advice and managerial support from Adnan Aziz, Jana van Greunen, Amit Nagpal, Maxim Naumov, Sandeep Parab, Joel Pobar, Chunqiang Tang, and Richard E. Turner.

Runa Eschenhagen is supported by ARM, the Cambridge Trust, and the Qualcomm Innovation Fellowship.
This work was performed by Runa while he was an intern in the Meta AI \& Systems Co-Design team and external research collaborator in the Meta Superintelligence Labs Infrastructure Kernels \& Optimizations team.

\section*{Impact Statement}

This paper presents work whose goal is to advance the field of Machine
Learning. There are many potential societal consequences of our work, none
which we feel must be specifically highlighted here.

\bibliography{bibliography}
\bibliographystyle{icml2026}

\newpage
\appendix
\onecolumn
\section{Algorithms}
\label{sec:algos-appendix}

\begin{table}[t]
\caption{Table showing the relationship between different optimizers for choices of $\beta_1$ and $\beta_2$, similar to Table 1 in \citet{ziyin2020laprop}. 
This captures the relationships more precisely than \Cref{fig:summary} (middle). 
We use $:=$ to indicate that this setting corresponds to the general definition of the optimizer in question. 
We ignore numerical differences between methods for computing the sign, orthogonalization, and preconditioning. 
Here, momentum refers to the EMA of the (preconditioned) gradient, controlled by $\beta_1$.}
\label{tab:optimizer-variants-betas}
\centering
\begin{tabular}{llll}
\toprule
\textbf{Optimizer} & \multicolumn{3}{c}{\textbf{Hyperparameters}} \\
& & $\beta_2 = 0$ & $\beta_2 > 0$ \\
\midrule
\multirow{2}{*}{Adam \citep{kingma2014adam}}
& $\beta_1 = 0$ & SignGD & RMSProp \\
& $\beta_1 > 0$ & \textcolor{BurntOrange}{unstable} & := \\
\midrule
\multirow{2}{*}{LaProp \citep{ziyin2020laprop}}
& $\beta_1 = 0$ & SignGD & RMSProp \\
& $\beta_1 > 0$ & SignGD w/ momentum & := \\
\midrule
\multirow{2}{*}{BCOS-m \citep{jiang2025stochastic}}
& $\beta_1 = 0$ & SignGD & RMSProp \\
& $\beta_1 > 0$ & Signum & := \\
\midrule
\multirow{2}{*}{\makecell[l]{Shampoo$^{\sfrac{1}{4}}$ \\\citep{gupta2018shampoo,shi2023distributed}}}
& $\beta_1 = 0$ & SpectralGD & Shampoo w/o momentum \\
& $\beta_1 > 0$ & \textcolor{BurntOrange}{unstable} & := \\
\midrule
\multirow{2}{*}{LaProp Shampoo$^{\sfrac{1}{4}}$ (\Cref{sec:laprop-shampoo})}
& $\beta_1 = 0$ & SpectralGD & Shampoo w/o momentum \\
& $\beta_1 > 0$ & SpectralGD w/ momentum & := \\
\midrule
\multirow{2}{*}{BCOS-m Shampoo$^{\sfrac{1}{4}}$ (\Cref{sec:bcosm-shampoo})}
& $\beta_1 = 0$ & SpectralGD & Shampoo w/o momentum \\
& $\beta_1 > 0$ & Muon & := \\
\bottomrule
\end{tabular}
\end{table}

\begin{algorithm}[ht]
    \caption{Meta-Optimizer Pseudocode.}
    \label{alg:algo-template}
    \begin{algorithmic}[1]
        \Require Parameter $\mW_1 \in \R^{m \times n}$ with $\vtheta_1 = \vectorized{\left( \mW_1 \right)} \in \R^{mn}$, learning rates $\alpha_t > 0$, exponent $p > 0$, weight decay $\lambda \geq 0$, $\epsilon > 0$, $\beta_1, \beta_2, \beta_3 \in [0, 1)$, and bias corrections $c_{i, t} \in \{1,  1 - \beta_i^t\}$.
        \State Initialize $\mC_0 \in \R^{mn \times mn}, \vm_0 = \vv_0 = \mathbf{0} \in \R^{mn}$.
        \For{$t = 1, ..., T$}
            \State Sample $\mathcal{B} \sim \mathcal{D}$.
            \State Compute $\vg_t = \nabla \ell_{\mathcal{B}} (\vtheta_t)$.
            \State Update $\vm_t = \beta_1 \vm_{t-1} + \left( 1 - \beta_1 \right) \vg_t$.
            \State Update $\mC_t = \mathtt{update\_preconditioner}\left(\mC_{t-1}, \beta_2, c_{2, t}, \vg_t, \vm_t / c_{1, t} \right)$.
            \State Compute $\vu_t = \mathtt{grafting}\left(\mathtt{stabilized\_inverse\_root}\left(\mC_t, \epsilon, p \right) \vm_t / c_1^{(t)} \right)$.
            \State Update $\vv_t = \beta_3 \vv_{t-1} + \left( 1 - \beta_3 \right) \vu_t$.
            \State Update $\vtheta_{t + 1} = \vtheta_t - \alpha_t (\vv_t / c_3^{(t)} + \lambda \vtheta_t)$.
        \EndFor
    \end{algorithmic}
\end{algorithm}

\subsection{Shampoo Variants}

See \Cref{tab:optimizer-variants-betas} for an overview of the relationship of Adam and Shampoo variants with sign and spectral descent variants, depending on $\beta_1$ and $\beta_2$.
A less precise version of this table is also visualized in \Cref{fig:summary} (middle). All considered algorithms (including all algorithms in \Cref{tab:optimizer-variants-betas}, KL-Shampoo, and EShampoo variants) can be implemented within \Cref{alg:algo-template} by choosing the corresponding subroutines.

\subsubsection{LaProp Shampoo}
\label{sec:laprop-shampoo}

Analogously to LaProp \citep{ziyin2020laprop}, we can define a Shampoo update where we first precondition the stochastic gradient and then compute the EMA over the \emph{preconditioned} gradient, \ie $\beta_1=0$ and $\beta_3 \in (0, 1)$ in \Cref{alg:algo-template}.

\subsubsection{BCOS-m Shampoo}
\label{sec:bcosm-shampoo}

Analogously to BCOS-m \citep{jiang2025stochastic}, we can define BCOS-m Shampoo, with factor matrices:
\begin{equation}
\begin{split}
    \mL_t &= \beta_2 \mL_{t-1} + (1 - \beta_2) \mM_t \mM_t^\transpose, \\
    \mR_t &= \beta_2 \mR_{t-1} + (1 - \beta_2) \mM_t^\transpose \mM_t, \\
\end{split}
\end{equation}
or BCOS-m KL-Shampoo, with factor matrices:
\begin{equation}
\begin{split}
    \mL_t &= \beta_2 \mL_{t-1} + (1 - \beta_2) \mM_t \mR_{t-1}^{-1} \mM_t^\transpose, \\
    \mR_t &= \beta_2 \mR_{t-1} + (1 - \beta_2) \mM_t^\transpose \mL_{t-1}^{-1} \mM_t, \\
\end{split}
\end{equation}
where $\mM_t = \beta_1 \mM_{t-1} + (1-\beta_1) \mG_t$.
We then precondition $\mM_t$ with the inverse roots of $\mL_t$ and $\mR_t$.

\subsubsection{Centered Shampoo}
\label{sec:centered-shampoo-appendix}

In its simplest form, centered Shampoo's preconditioner could be defined as
\begin{equation}
\begin{split}
    \mL_t &= \beta_2 \mL_{t-1} + (1 - \beta_2) \left( \mG_t \mG_t^\transpose - \mM_t \mM_t^\transpose \right), \\
    \mR_t &= \beta_2 \mR_{t-1} + (1 - \beta_2) \left( \mG_t^\transpose \mG_t - \mM_t^\transpose \mM_t \right), \\
\end{split}
\end{equation}
or alternatively by preconditioning with
\begin{equation}
\begin{split}
    \tilde{\mL_t} &= \mL_t - \mM_t \mM_t^\transpose, \\
    \tilde{\mR_t} &= \mR_t - \mM_t^\transpose \mM_t, \\
\end{split}
\end{equation}
where $\mL_t$ and $\mR_t$ are the regular Shampoo factor matrices.
We can add $\epsilon$ to both variants for stability and to ensure that each term is positive semi-definite.

\subsection{Implementation of KL-Shampoo}
\label{sec:kl-shampoo-implementation}

The KL-Shampoo update in \Cref{eq:kl-shampoo} is implemented via outer products of preconditioned gradient matrices,
\begin{equation}
\begin{split}
    \tilde{\mG}_t^\mL &= \mG_t (\mR_{t-1} + \epsilon \mathbf{I}_n)^{-\frac{1}{2}} \\
    \tilde{\mG}_t^\mR &= (\mL_{t-1} + \epsilon \mathbf{I}_m)^{-\frac{1}{2}}\mG_t \\
    \mL_t &= \beta_2 \mL_{t-1} + (1 - \beta_2) \tilde{\mG}_t^\mL \tilde{\mG}_t^{\mL\transpose}, \\
    \mR_t &= \beta_2 \mR_{t-1} + (1 - \beta_2) \tilde{\mG}_t^{\mR\transpose} \tilde{\mG}_t^\mR, \\
\end{split}
\end{equation}
where the (potentially stale) inverse roots can be reused from previous iterations.
This highlights the more complex role of $\epsilon$ in KL-Shampoo compared to regular Shampoo, where it is only used for preconditioning in the update (\cf \Cref{eq:shampoo}). 

The original KL-Shampoo algorithm proposed by \citet{lin2025understanding} also introduces a dimension-dependent scaling and a per-factor eigenvalue correction.
Since these modifications could be applied to all Shampoo variants and we want to isolate the coupled factor matrices update in KL-Shampoo, we do not use these modifications.
However, they are necessary when updating the preconditioner sparsely and no grafting is used \citep{eschenhagen2025purifying}.

\section{Experimental Details}
\label{sec:experiments-appendix}

\subsection{Language Modeling}
\label{sec:language-experiments}

\subsubsection{Experimental Setup}

All language models are trained on the \texttt{TorchTitan} stack \cite{liang2025torchtitan} using Llama 3 architectures \citep{grattafiori2024llama3} and C4 data \cite{raffel2023exploring}; configuration details can be found in \cref{tab:model-config} below.
All optimizers are implemented in the PyTorch Distributed Shampoo codebase \citep{shi2023distributed} (\Cref{alg:algo-template}).

\begin{table}[t]
    \caption{Configuration details for each model size used in our experiments. The number of tokens was originally determined by applying the Chinchilla optimal heuristic ($1\times \approx 20$ tokens per parameter) on the total number of parameters; however, we based our calculations on incorrect total parameter counts. As prior work suggests that compute optimality may be significantly lower than $20$ times the parameter count for matrix optimizers \citep{chen2025scale}, we decided to keep our original token budgets.}
    \label{tab:model-config}
    \centering
    \begin{tabular}{l cc}
    \toprule
         $P$ & 320M & 1.5B \\
         \midrule
         dimension & 768 & 1792 \\
         number of layers & 18 & 26 \\
         heads & 12 & 16 \\
         sequence length & 2048 & 2048 \\
         vocabulary size & 128K & 128K \\
         \midrule
         number of tokens in $1\times$ budget & 3.2B & 22B \\
    \bottomrule
    \end{tabular}
\end{table}

We categorize the four settings into ``small" runs -- the 320M $1\times$ token budget settings (i.e. the first two rows in \cref{tab:core-methods}) -- and ``large" runs -- the 320M $8\times$ token budget and 1.5B $1\times$ token budget settings (i.e. the last two rows in \cref{tab:core-methods}). All settings follow the same procedure for hyperparameter tuning described below, with minor differences between ``small" and ``large" settings.

\begin{enumerate}
    \item We initially sweep learning rate $\alpha$ and EMA parameters ($\beta_1$, $\beta_2$), using the following default values for all other hyperparameters and training settings:
    \begin{itemize}
        \item epsilon $\epsilon$: $10^{-8}$ for AdamW, $10^{-12}$ for Shampoo methods
        \item weight decay: 0.1\\
        We did sweep the values 0.02 and 0.5 for Muon and KL-Shampoo (in the 320M, $1\times$ token budget, batch size 256 setting) as well, confirming that for both methods the order was the same and 0.1 performed best.
        \item learning rate schedule: 10\% linear warmup followed by cosine decay to 0.0
        \item gradient clipping: global maximum $\ell_2$ norm of 1.0
        \item precondition frequency: 1 (every iteration) for Shampoo methods
        \item fixed seed, but with deterministic algorithms turned off
    \end{itemize}
    We determine initial search spaces for learning rate and $\beta$s based on the study by \citet{orvieto2025search}, or for our ``large" settings, based on experiments in the ``small" settings. We extend the search space for each hyperparameter until a U-curve around each hyperparameter's best value is achieved. Final overall search spaces are reported in \cref{tab:hparams-space}.
    \item For AdamW and the Shampoo methods, we then sweep $\epsilon$ using the best learning rate and $\beta$s from the initial sweep.
    \item For methods that use (AdamW) grafting, we use the best $\beta_2$ and $\epsilon$ from the AdamW sweeps in the equivalent setting for the grafted optimizer.
    \item To compute the final reported results, we repeat the best hyperparameter configuration from the above sweeps for 10 different seeds. For these runs, we turn on deterministic algorithms.
\end{enumerate}

\begin{table}[t]
    \centering
    \caption{Final search spaces covered by hyperparameter sweeps. We report inclusive [mininum, maximum] values of each hyperparameter swept for each method/setting. For each range, we sweep all values in $2\times$ (learning rate, $1 - \beta_1$, $1 - \beta_2$) or $10\times$ ($\epsilon$) increments. (Single values indicate that the hyperparameter was not swept.) For instance, in the Shampoo$^{\sfrac{1}{4}}$ 320M $1\times$ batch size 64 setting (first row), the swept hyperparameters are \{0.002, 0.004, 0.08, 0.016\} for learning rate, \{0.8, 0.9, 0.95, 0.975, 0.9875\} for $\beta_1$, \{0.6, 0.8, 0.9, 0.95, 0.975, 0.9875, 0.99375, 0.996875\} for $\beta_2$, and \{$10^{-33}$, $10^{-32}$, $10^{-31}$, \dots, $10^{-14}$, $10^{-13}$, $10^{-12}$\} for $\epsilon$.}
    \label{tab:hparams-space}
    \begin{tabular}{lll l cccc}
\toprule
$P$ & $T$ & $B$ & Method & $\alpha$ & $\beta_1$ & $\beta_2$ & $\epsilon$ \\
\midrule
\multirow{6}{*}{320M} & \multirow{6}{*}{$1\times$} & \multirow{6}{*}{64}
  & Shampoo$^{\sfrac{1}{4}}$   & [0.002, 0.016] & [0.8, 0.9875] & [0.6, 0.996875] & [$10^{-33}$, $10^{-12}$] \\
  &&& Shampoo$^{\sfrac{1}{2}}$   & [0.002, 0.016] & [0.8, 0.9875] & [0.6, 0.996875] & [$10^{-16}$, $10^{-12}$] \\
  &&& KL-Shampoo    & [0.002, 0.016] & [0.8, 0.9875] & [0.6, 0.996875] & [$10^{-13}$, $10^{-6}$] \\
  &&& Muon          & [0.002, 0.016] & [0.8, 0.9875] & -- & -- \\
  &&& AdamW         & [0.001, 0.016] & [0.8, 0.99375] & [0.6, 0.996875] & $10^{-8}$ \\
  &&& Signum        & [0.0000625, 0.004] & [0.8, 0.9875] & -- & -- \\
\midrule
\multirow{6}{*}{320M} & \multirow{6}{*}{$1\times$} & \multirow{6}{*}{256}
  & Shampoo$^{\sfrac{1}{4}}$   & [0.008, 0.064] & [0.8, 0.9875] & [0.6, 0.996875] & [$10^{-25}$, $10^{-10}$] \\
  &&& Shampoo$^{\sfrac{1}{2}}$   & [0.008, 0.064] & [0.8, 0.9875] & [0.6, 0.996875] & [$10^{-18}$, $10^{-8}$] \\
  &&& KL-Shampoo    & [0.008, 0.064] & [0.8, 0.9875] & [0.6, 0.996875] & [$10^{-12}$, $10^{-6}$] \\
  &&& Muon          & [0.008, 0.064] & [0.8, 0.9875] & -- & -- \\
  &&& AdamW         & [0.001, 0.016] & [0.8, 0.9875] & [0.6, 0.996875] & $10^{-8}$ \\
  &&& Signum        & [0.0000625, 0.004] & [0.8, 0.9875] & -- & -- \\
\midrule
\multirow{6}{*}{320M} & \multirow{6}{*}{$8\times$} & \multirow{6}{*}{256}
  & Shampoo$^{\sfrac{1}{4}}$   & [0.002, 0.016] & [0.95, 0.996875] & [0.8, 0.9875] & [$10^{-36}$, $10^{-12}$] \\
  &&& Shampoo$^{\sfrac{1}{2}}$   & [0.002, 0.016] & [0.95, 0.996875] & [0.8, 0.975] & [$10^{-16}$, $10^{-8}$] \\
  &&& KL-Shampoo    & [0.002, 0.008] & [0.9875, 0.996875] & [0.8, 0.975] & [$10^{-14}$, $10^{-6}$] \\
  &&& Muon          & [0.002, 0.016] & [0.9, 0.99375] & -- & -- \\
  &&& AdamW         & [0.002, 0.016] & [0.95, 0.99375] & [0.8, 0.975] & [$10^{-15}$, $10^{-5}$] \\
  &&& Signum        & [0.0005, 0.004] & [0.9, 0.9875] & -- & -- \\
\midrule
\multirow{6}{*}{1.5B} & \multirow{6}{*}{$1\times$} & \multirow{6}{*}{256}
  & Shampoo$^{\sfrac{1}{4}}$   & [0.002, 0.016] & [0.9, 0.996875] & [0.8, 0.9875] & [$10^{-36}$, $10^{-12}$] \\
  &&& Shampoo$^{\sfrac{1}{2}}$   & [0.001, 0.016] & [0.9, 0.996875] & [0.8, 0.975] & [$10^{-17}$, $10^{-12}$] \\
  &&& KL-Shampoo    & [0.001, 0.004] & [0.9875, 0.996875] & [0.8, 0.9875] & [$10^{-12}$, $10^{-2}$] \\
  &&& Muon          & [0.002, 0.016] & [0.9, 0.996875] & - & - \\
  &&& AdamW         & [0.001, 0.008] & [0.95, 0.996875] & [0.9, 0.996875] & [$10^{-16}$, $10^{-9}$] \\
  &&& Signum        & [0.005, 0.004] & [0.6, 0.95] & - & - \\
\bottomrule
\end{tabular}
\end{table}

Below we list the exceptions from the above procedure:
\begin{itemize}
    \item We did not tune $\epsilon$ for AdamW in the ``small" runs, so all methods with grafting use the default AdamW epsilon $10^{-8}$ in the grafted optimizer. We believe it is unlikely for a tuned grafting $\epsilon$ to affect different methods unequally and change the ordering of the results in \cref{tab:core-methods}.
    \item For ``large" runs of Shampoo$^{\sfrac{1}{4}}$, we swept $\epsilon$ in $1000\times$ increments instead of $10\times$ due to the large potential range of $\epsilon$.
    \item The best hyperparameter configuration from the initial sweep of Shampoo$^{\sfrac{1}{4}}$ in the 320M $8\times$ token budget setting proved to be unstable in the 10 repeat runs. So, we selected the second-best configuration from the sweep, which used $\beta_2 = 0.95$ instead of $0.9$. This configuration was still somewhat unstable, with 2 of 10 runs having significantly worse performance, causing the high reported variance in \cref{tab:core-methods}.
    \item In all settings, we assume Muon (SVD) and Muon (NS) to have the same best hyperparameter configuration, so to save resources, all sweeps were tuned with NS. Only 10 repeat runs on different seeds were run with SVD.
\end{itemize}

\subsubsection{Compute Costs}

\begin{table*}[h!]
\caption{Estimated compute-hours per run for each core method in each setting, with deterministic computation.}
\label{tab:compute-estimates}
\centering
\begin{scriptsize}
\begin{tabular}{lllccccccccc}
\toprule
$P$ & $T$ & $B$ & \# GPUs & GPU Type & Shampoo$^{\sfrac{1}{4}}$ & Shampoo$^{\sfrac{1}{2}}$ & KL-Shampoo & Muon (SVD) & Muon (NS) & AdamW & Signum \\
\hline
\multirow{3}{*}{320M} & \multirow{2}{*}{$1\times$} & 64 & 8 & H100 & 56 & 56 & 56 & 72 & 32 & 32 & 32 \\
& & 256 & 16 & H100 & 32 & 32 & 32 & 32 & 24 & 24 & 24 \\
& $8\times$ & 256 & 8 & GB200 & 152 & 152 & 152 & 184 & 112 & 72 & 72 \\
\hline
1.5B & $1\times$ & 256 & 8 & GB200 & 272 & 272 & 272 & 352 & 160 & 144 & 144 \\
\bottomrule
\end{tabular}
\end{scriptsize}
\end{table*}

We estimate the compute-hours per run for each method and setting in \cref{tab:compute-estimates} by multiplying the approximate total runtime and number of machines used. Based on these estimates, we estimate a total of 60,000 compute-hours on NVIDIA H100s and 127,000 compute-hours on NVIDIA GB200s across $\sim$2,300 runs were used to produce our results in \cref{tab:core-methods}. To produce the ablations in the rest of the paper, an additional 500 compute-hours on H100s and 3,000 compute-hours on GB200s across $\sim$1,600 runs were used. These estimates do not include any additional exploratory experiments we performed throughout our investigation.

\subsubsection{Hyperparameter Trends}

\begin{table}[t]
\centering
\caption{Best hyperparameters for each core method in each setting in \cref{tab:core-methods}. \textcolor{BurntOrange}{Colored} values were not tuned and starred ($^*$) values were manually adjusted; see text for more details on exceptions to our tuning procedure.}
\label{tab:hparams-core}
\begin{tabular}{lll l cccc}
\toprule
$P$ & $T$ & $B$ & Method & $\alpha$ & $\beta_1$ & $\beta_2$ & $\epsilon$ \\
\midrule
\multirow{6}{*}{320M} & \multirow{6}{*}{$1\times$} & \multirow{6}{*}{64}
  & Shampoo$^{\sfrac{1}{4}}$   & 0.008 & 0.9875 & 0.9 & $10^{-30}$ \\
  &&& Shampoo$^{\sfrac{1}{2}}$   & 0.004 & 0.9875 & 0.95 & $10^{-15}$ \\
  &&& KL-Shampoo     & 0.004 & 0.9875 & 0.975 & $10^{-10}$ \\
  &&& Muon           & 0.004 & 0.9875 & -- & -- \\
  &&& AdamW          & 0.004 & 0.9875 & 0.99375 & \textcolor{BurntOrange}{$10^{-8}$} \\
  &&& Signum         & 0.00025 & 0.9875 & -- & -- \\
\midrule
\multirow{6}{*}{320M} & \multirow{6}{*}{$1\times$} & \multirow{6}{*}{256}
  & Shampoo$^{\sfrac{1}{4}}$   & 0.016 & 0.95 & 0.9 & $10^{-23}$ \\
  &&& Shampoo$^{\sfrac{1}{2}}$   & 0.016 & 0.95 & 0.8 & $10^{-15}$ \\
  &&& KL-Shampoo     & 0.016 & 0.95 & 0.8 & $10^{-10}$ \\
  &&& Muon           & 0.016 & 0.95 & -- & -- \\
  &&& AdamW          & 0.008 & 0.95 & 0.95 & \textcolor{BurntOrange}{$10^{-8}$} \\
  &&& Signum         & 0.001 & 0.9 & -- & -- \\
\midrule
\multirow{6}{*}{320M} & \multirow{6}{*}{$8\times$} & \multirow{6}{*}{256}
  & Shampoo$^{\sfrac{1}{4}}$   & 0.004 & 0.99375 & 0.95$^*$ & $10^{-27}$ \\
  &&& Shampoo$^{\sfrac{1}{2}}$   & 0.004 & 0.99375 & 0.95 & $10^{-13}$ \\
  &&& KL-Shampoo     & 0.004 & 0.99375 & 0.95 & $10^{-8}$ \\
  &&& Muon           & 0.008 & 0.9875 & -- & -- \\
  &&& AdamW          & 0.004 & 0.9875 & 0.95 & $10^{-7}$ \\
  &&& Signum         & 0.002 & 0.975 & -- & -- \\
\midrule
\multirow{6}{*}{1.5B} & \multirow{6}{*}{$1\times$} & \multirow{6}{*}{256}
  & Shampoo$^{\sfrac{1}{4}}$   & 0.004 & 0.9875 & 0.975 & $10^{-30}$ \\
  &&& Shampoo$^{\sfrac{1}{2}}$   & 0.002 & 0.9875 & 0.9 & $10^{-16}$ \\
  &&& KL-Shampoo     & 0.002 & 0.99375 & 0.95 & $10^{-8}$ \\
  &&& Muon           & 0.004 & 0.99375 & -- & -- \\
  &&& AdamW          & 0.004 & 0.99375 & 0.99375 & $10^{-12}$ \\
  &&& Signum         & 0.002 & 0.8 & -- & -- \\
\bottomrule
\end{tabular}
\end{table}

\begin{figure}[t]
    \centering
    \includegraphics[width=0.9\linewidth]{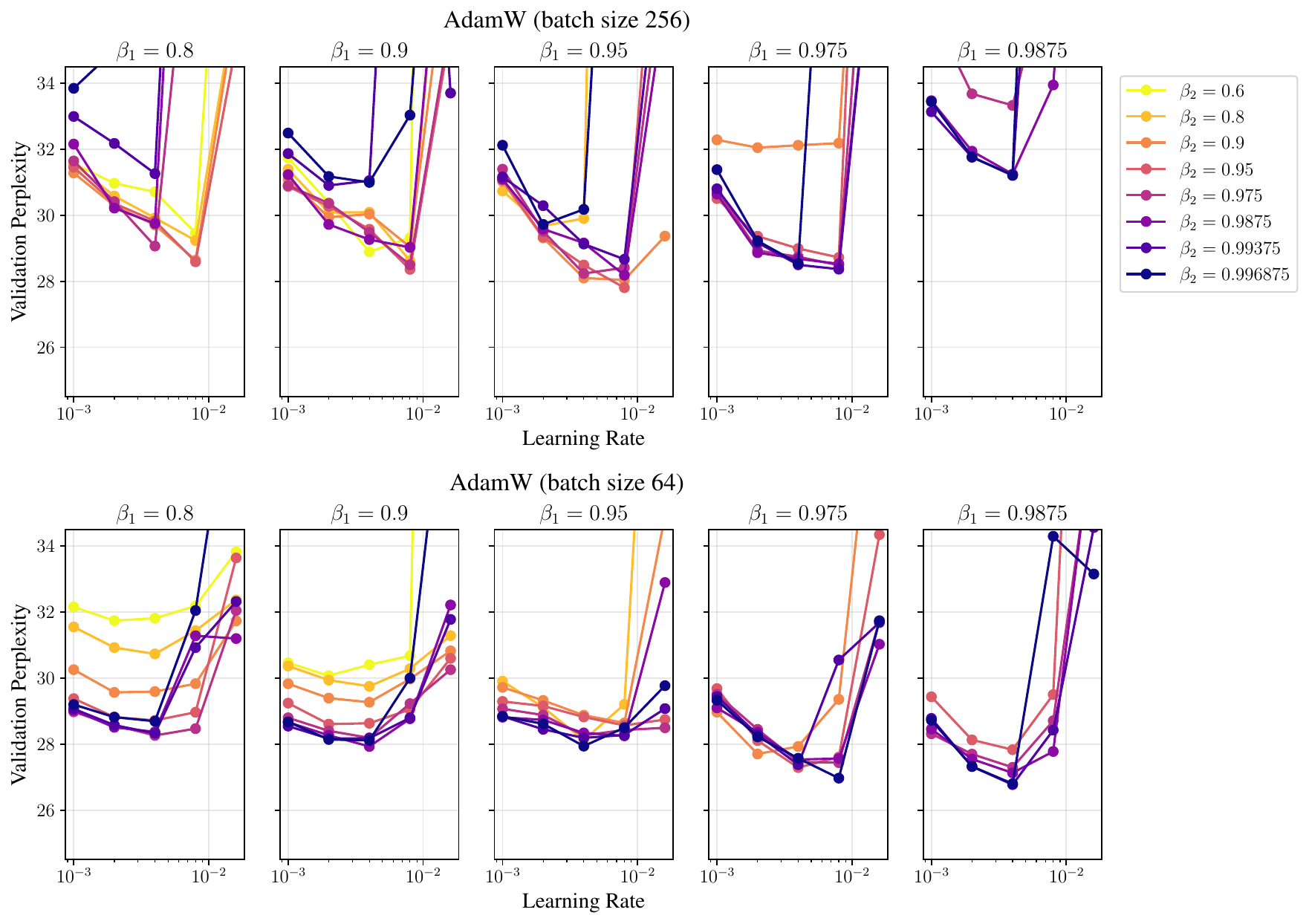}
    \caption{Final validation perplexity (single runs) across search space of learning rate and $\beta$s of AdamW in the 320M, $1\times$ token budget, batch size 256 and 64 settings. See \cref{tab:hparams-space} for search space ranges. Note that underperforming runs may not be shown.}
    \label{fig:hparam-plots-adamw}
\end{figure}

\begin{figure}[t]
    \centering
    \includegraphics[width=0.8\linewidth]{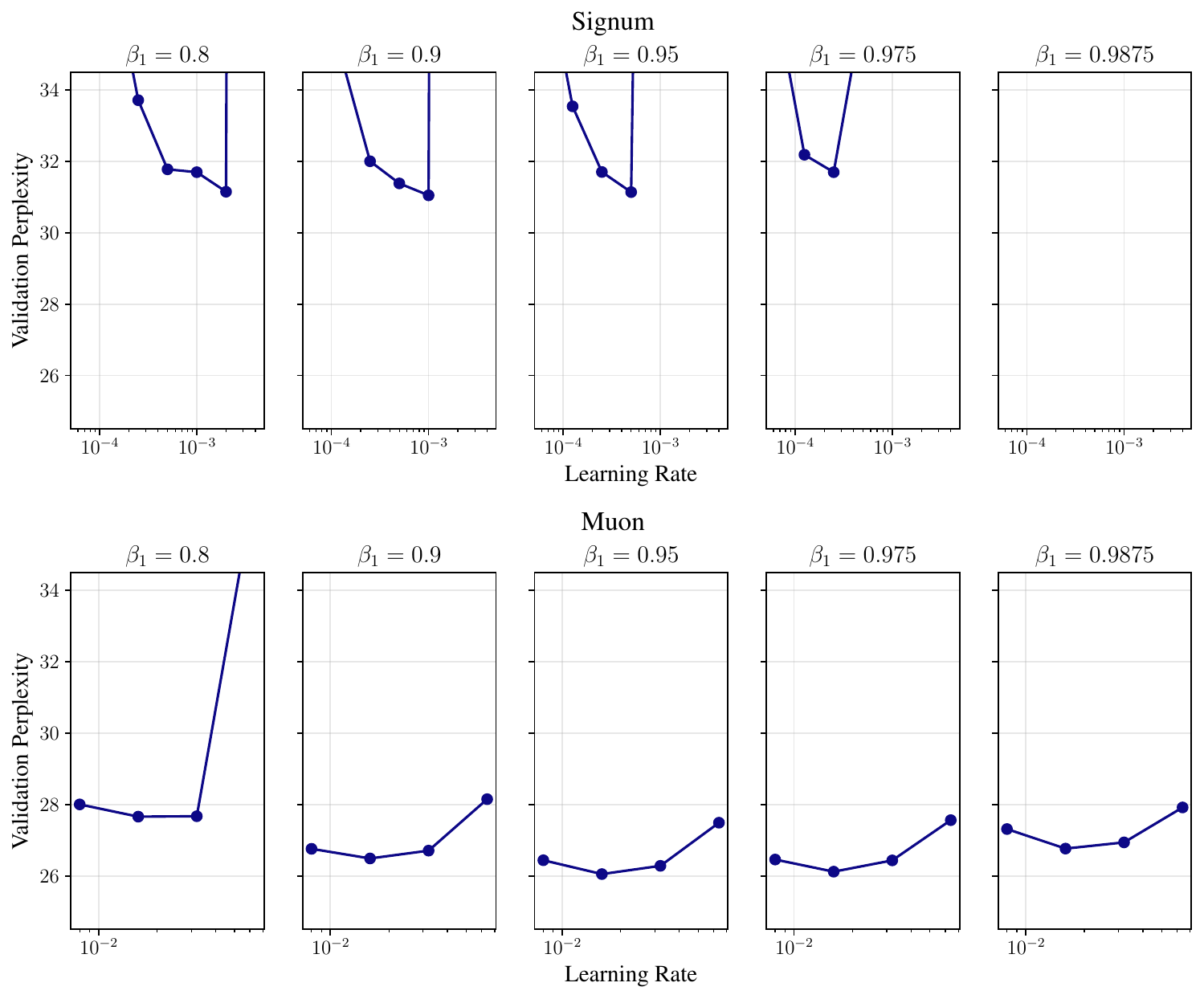}
    \caption{Final validation perplexity (single runs) across search space of learning rate and $\beta_1$ of Signum and Muon in the 320M, $1\times$ token budget, batch size 256 setting. See \cref{tab:hparams-space} for search space ranges. Note that underperforming runs may not be shown.}
    \label{fig:hparam-plots-other}
\end{figure}

\begin{figure}
    \centering
    \includegraphics[width=0.9\linewidth]{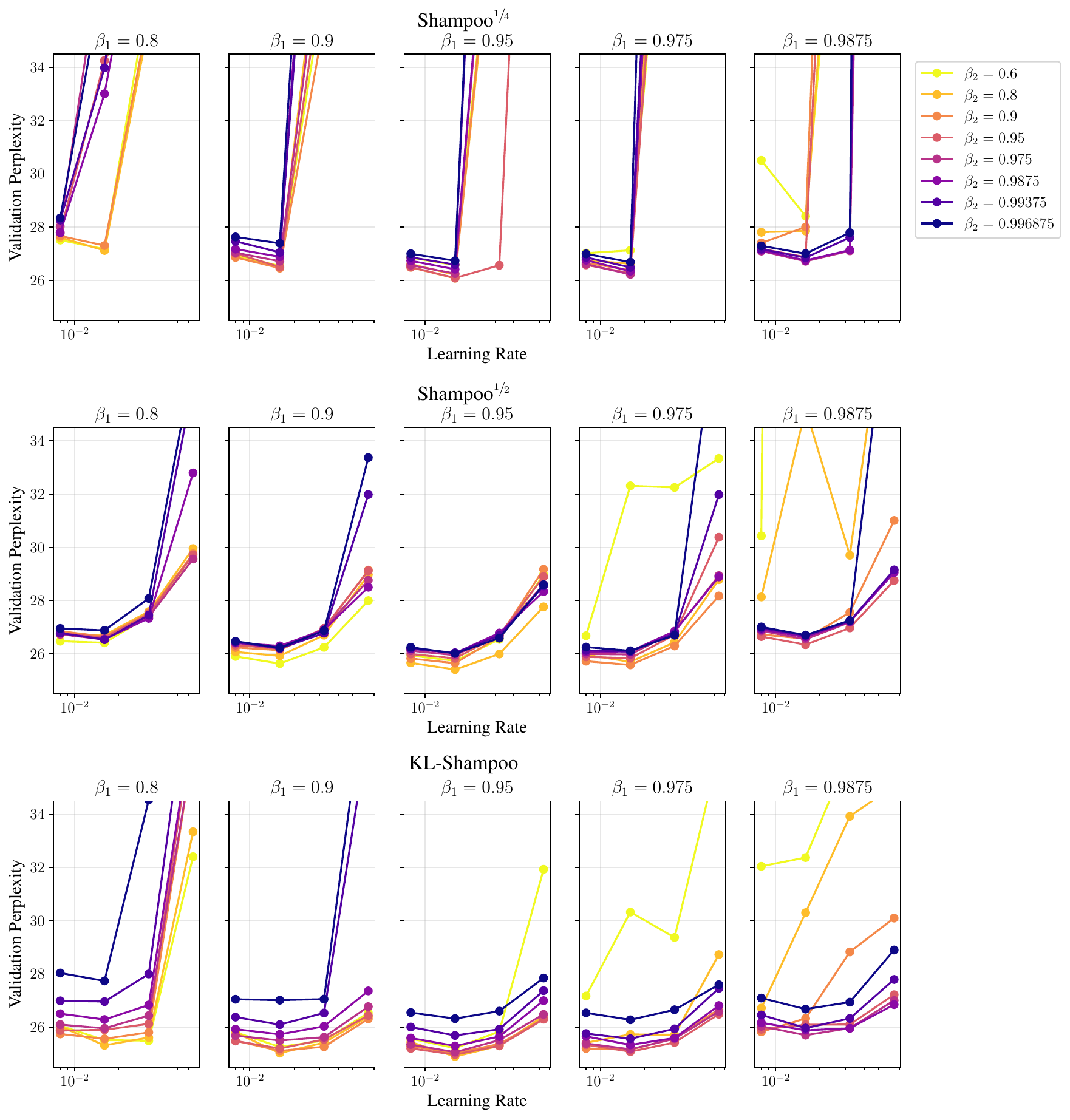}
    \caption{Final validation perplexity (single runs) across search space of learning rate and $\beta$s of Shampoo methods in the 320M, $1\times$ token budget, batch size 256 setting. See \cref{tab:hparams-space} for search space ranges. Note that underperforming runs may not be shown.}
    \label{fig:hparam-plots-shampoo}
\end{figure}

\begin{table}[h!]
    \caption{Best hyperparameters for all experiments on language models outside of \cref{tab:core-methods}. See the reference column for the corresponding table with experimental results.}
    \label{tab:combined-hparams}
    \centering
    \begin{tabular}{ll cccc}
\toprule
Reference & Method & $\alpha$ & $\beta_1$ & $\beta_2$ & $\epsilon$ \\
\midrule
\multirow{2}{*}{\cref{fig:summary}} & AdamW (reduced budget) & 0.004 & 0.975 & 0.99375 & $10^{-8}$ \\
 & KL-Shampoo (reduced budget) & 0.016 & 0.95 & 0.95 & $10^{-10}$ \\
\midrule
\multirow{2}{*}{\cref{tab:eshampoo}} & EShampoo (no grafting) & 0.016 & 0.975 & 0.99375 & $10^{-6}$ \\
 & EShampoo (grafting) & 0.016 & 0.95 & 0.99375 & $10^{-30}$ \\
\midrule
\multirow{2}{*}{\cref{tab:muon-scalings}} & Muon (classic) & 0.016 & 0.975 & -- & -- \\
 & Muon (moonlight) & 0.016 & 0.95 & -- & -- \\
\midrule
\multirow{2}{*}{\cref{tab:shampoo-otherparams}} & Shampoo$^{\sfrac{1}{2}}$ (1D) & 0.008 & 0.95 & 0.9 & $10^{-15}$ \\
 & Shampoo$^{\sfrac{1}{2}}$ (embed.) & 0.008 & 0.95 & 0.8 & $10^{-8}$ \\
\midrule
\multirow{2}{*}{\cref{tab:one-sided-shampoo}} & (KL-)Shampoo $\mL_t$ Only & 0.016 & 0.95 & 0.95 & $10^{-16}$ \\
 & (KL-)Shampoo $\mR_t$ Only & 0.016 & 0.95 & 0.8 & $10^{-16}$ \\
\midrule
\multirow{2}{*}{\cref{tab:kl-shampoo-variants}} & LaProp KL-Shampoo & 0.016 & 0.95 & 0.9 & $10^{-9}$ \\
 & BCOS-m KL-Shampoo & 0.016 & 0.975 & 0.8 & $10^{-11}$ \\
\bottomrule
\end{tabular}
\end{table}

We note some observations on the general hyperparameter trends across our experiments. 

\paragraph{Correlation between $\beta_1$ and $\beta_2$. } We find that for all relevant methods (AdamW and the Shampoo variants), the optimal $\beta_1$ and $\beta_2$ values are strongly positively correlated.

This is notably consistent with the AdamW experiments by \citet{orvieto2025search}. However, we do not observe that smaller differences between $\beta_1$ and $\beta_2$ consistently correlate with lower loss across all settings. For instance, in the 320M $1\times$ token budget settings, we find that loss is generally lowest when $\beta$s are closer for batch size 256, but we do not find this to be the case for batch size 64 (see \cref{fig:hparam-plots-adamw}).

\paragraph{Influence of grafting. } We hypothesize that the use of AdamW as the grafted optimizer may have affected the optimal hyperparameters of the matrix optimizers; however, we have not experimented with removing grafting from any of the methods except Muon.
One observation likely related to this influence is that all methods using AdamW grafting have the same or very similar optimal $\beta_1$ as AdamW within the same settings.
We also observe that for the Shampoo variants, the optimal $\beta_2$ values tend to be lower than the optimal $\beta_1$ values (\cref{fig:hparam-plots-shampoo}). This could be due to grafting, which removes the influence of Shampoo's $\beta_2$ on the update magnitude.

\paragraph{Learning rates. } With the exception of the 320M, 1$\times$ token budget, batch size 256 setting, we do not observe that the matrix-based optimizers (Shampoo variants and Muon) benefit from larger learning rates compared to element-wise methods (AdamW and Signum). This contrasts with the findings in \citet{semenov2025benchmarking}. However, it is possible that our learning rate sweeps were not granular enough to capture this trend.
Additionally, we observe that Signum uses much smaller learning rate than the other methods, which is consistent with the experiments by \citet{orvieto2025search}.

\paragraph{$\epsilon$ in Shampoo. }
\label{sec:epsilon-role}

We observe that optimal $\epsilon$ values for the different Shampoo methods are quite different. For example, in the 320M model setting with $1\times$ token budget and batch size 256, the best values of $\epsilon$ for Shampoo$^{\sfrac{1}{4}}$, Shampoo$^{\sfrac{1}{2}}$, and KL-Shampoo are $10^{-23}$, $10^{-15}$, and $10^{-10}$, respectively (\Cref{tab:hparams-core}).
Notably, the optimal $\epsilon$ for Shampoo$^{\sfrac{1}{4}}$ is significantly smaller than for Shampoo$^{\sfrac{1}{2}}$. 
We attribute this discrepancy to the order of operations: $\epsilon$ is added to the factor matrices \emph{before} computing the root, unlike in Adam. 
If we compare the effective damping by examining the root of $\epsilon$ itself, the values align more closely, \ie, $\epsilon^{1/4} \approx 1.78 \cdot 10^{-6}$ for Shampoo$^{\sfrac{1}{4}}$ and $\epsilon^{1/2} \approx 3.16 \cdot 10^{-8}$ for Shampoo$^{\sfrac{1}{2}}$.
The choice of $\epsilon$ in KL-Shampoo is more subtle, as it influences both the preconditioning and the factor matrices themselves; see \Cref{sec:kl-shampoo-implementation}.

\paragraph{Effect of batch size, token budget, and model size. } Since we only test two different settings for each of batch size, token budget, and model size, we cannot draw any definitive conclusions about their effects on various hyperparameters. However, we make some initial observations on the patterns that exist in our data.

We observe that increasing the batch size (while keeping all other settings constant) increases the optimal learning rate and decreases the optimal $\beta$s, which is consistent with the findings in \citet{marek2025smallbatch}.
We also observe that increasing the token budget or model size (keeping all other settings constant) causes the optimal learning rate to decrease. 
Using the maximal-update parameterization ($\mu$P) can yield better hyperparameter transfer across model sizes \citep{yang2024spectral}.

\subsubsection{Comparison to Prior Work on Language Model Optimizer Benchmarking}

Recent benchmarks by \citet{semenov2025benchmarking} and \citet{wen2025fantastic} evaluate 10 -- 11 optimizers across dense and Mixture-of-Experts language models, utilizing different sequence lengths, tokenizers, and datasets compared to our study. 
\citet{semenov2025benchmarking} find that while matrix-based optimizers generally outperform AdamW, variance reduction methods like Mars \citep{yuan2024mars} and AdEMAMix \citep{pagliardini2024ademamix} are most effective in small-batch regimes. 
Conversely, \citet{wen2025fantastic} show matrix-based optimizers outperform element-wise optimizers (including variance-reduced methods) in large-batch regimes, attributing the discrepancy to batch size effects, though they note diminishing returns at scale and caution that rankings often flip during the learning rate decay phase. 

Evidence comparing SOAP and Muon is mixed: \citet{wen2025fantastic} prefer Muon for small token budgets and SOAP for larger ones, while \citet{semenov2025benchmarking} find SOAP superior up to 210M parameters, but outperformed by AdamW and D-Muon (a Muon variant with corrected weight decay) at larger scale.
These discrepancies likely stem from numeric confounders; neither benchmark updates SOAP's eigenbasis at every iteration, carefully tunes $\epsilon$, or removes blocking.
Our experiments explicitly control for these confounders.

Closest to our work, \citet{frans2025really} observe that SOAP generally outperforms Muon, suggesting that its gains do not stem solely from orthogonalization.
Our results broadly support this conclusion.

\subsection{Image Classification}
\label{sec:image-experiments}

\subsubsection{Experimental Setup}

All models are trained on the Imagewoof dataset with randomized cropping and horizontal flips as data augmentation and with a cross entropy loss function, using $1\times$ NVIDIA A100 80GB GPU per run.\footnote{The Imagewoof dataset is available at \url{https://github.com/fastai/imagenette}.}
For the vision transformer (ViT) model, we use SimpleViT \citep{beyer2022better} with patch size $16$, $6$ heads, a depth of $12$ layers, an MLP dimension of $192$, dimension of $48$, gradient clipping with threshold $1$, and weight decay of $10^{-4}$.
For the ConvNeXt V2 architecture \citep{woo2023convnextv2} we use weight decay of $0.05$ and drop paths with rate $0.1$.

\begin{table}[t]
    \caption{Hyperparameters for \Cref{fig:full-batch}. Both matrix optimizers are using grafting with the optimal hyperparameters of AdamW/RMSProp.}
    \label{tab:fig3-optimal-hparams}
    \centering
    \begin{tabular}{l cccc}
\toprule
Method & $\alpha$ & $\beta_1$ & $\beta_2$ & $\epsilon$ \\
\midrule
SignGD & $0.001$ & $0$ & - & - \\
RMSProp & $0.001$ & $0$ & $0.99921875$ & $10^{-10}$ \\
SpectralGD & $0.003$ & $0$ & - & - \\
Shampoo$^{\sfrac{1}{2}}$ & $0.003$ & $0$ & $0.8$ & $10^{-12}$ \\
\bottomrule
\end{tabular}
\end{table}

\paragraph{\Cref{fig:reshape}.}
All models are trained for $100$ epochs, using a learning rate schedule consisting of a linear warmup for 353 steps followed by cosine decay. We use a batch size of $128$ and the default settings for $\beta_1 = 0.9$ and $\beta_2 = 0.999$.
The optimal learning rate for AdamW is $0.001$ and we use grafting with the same hyperparameters for all Shampoo runs.

\paragraph{\Cref{fig:full-batch}.}
To be able to train in a full-batch setting, we consider a subsampled dataset, using 5\% of examples of each class.
All models are trained for $1000$ steps/epochs, using linear warmup for 50 steps followed by cosine decay.
For SignGD and SpectralGD, we sweep the learning rate.
For RMSProp and Shampoo$^{\sfrac{1}{2}}$ without momentum ($\beta_1=0$), we sweep the learning rate and $\beta_2$ and use grafting from RMSProp (using its $\epsilon$ and optimal $\beta_2$); see \Cref{tab:fig3-optimal-hparams} for the optimal hyperparameters.

\section{Proofs}
\label{sec:proofs-appendix}

\optimaladaptationgeneral*
\begin{proof}
    Appendix B.3 in \citet{balles2017dissecting}.
\end{proof}

\optimaladaptationsign*
\begin{proof}
    Assume that $\mathbb{E}[g_i^2] > 0$ for all $i$. Then we have that
    \begin{equation}
    \begin{split}
        \E \left[ || \bm{\gamma} \odot \vg - \mathrm{sign}\left( \nabla \gL \right) ||_2^2 \right] &= \sum_{i=1}^d \E \left[ \left( \gamma_i g_i - \mathrm{sign}\left( \nabla \gL_i \right) \right)^2 \right] \\
        &= \sum_{i=1}^d \gamma_i^2 \E\left[ g_i^2 \right] - 2 \gamma_i (\nabla \gL_i) \cdot \mathrm{sign}(\nabla \gL_i) + \mathrm{sign}(\nabla \gL_i)^2.
    \end{split}
    \end{equation}
    Note that the function is separable with respect to $\gamma_i$ for $i = 1, ..., d$. 
    By computing the derivative \wrt $\gamma_i$ and setting it to zero, we get the solution
    \begin{equation}
        \gamma_i = \frac{(\nabla \gL_i) \cdot \mathrm{sign}\left( \nabla \gL_i \right) }{\E \left[ g_i^2 \right]} = \frac{|\nabla \gL_i|}{\E \left[ g_i^2 \right]}.
    \end{equation}
\end{proof}

\optimaladaptationspectral*
\begin{proof}
With $\mathrm{rank}(\nabla \gL) = k$, we can rewrite the objective as
\begin{equation}
\label{eq:A-spectral-objective}
\begin{split}
    \E \left[ || \mA \mG - \mU \mV^\transpose ||_F^2 \right] &= \E \left[ || \mA \mG - \left( \nabla \gL \nabla \gL^\transpose \right)^{\frac{\dagger}{2}} \nabla \gL ||_F^2 \right] \\
    &= \E \left[ \Tr\left( \left( \mA \mG - \left( \nabla \gL \nabla \gL^\transpose \right)^{\frac{\dagger}{2}} \nabla \gL \right)^\transpose \left( \mA \mG - \left( \nabla \gL \nabla \gL^\transpose \right)^{\frac{\dagger}{2}} \nabla \gL \right) \right) \right] \\
    &= \E \left[ \Tr\left( \left( \mG^\transpose \mA^\transpose - \nabla \gL^\transpose \left( \nabla \gL \nabla \gL^\transpose \right)^{\frac{\dagger}{2}} \right) \left( \mA \mG - \left( \nabla \gL \nabla \gL^\transpose \right)^{\frac{\dagger}{2}} \nabla \gL \right) \right) \right] \\
    &= \E \left[ \Tr\left( \mG^\transpose \mA^\transpose \mA \mG \right) - 2 \Tr\left( \mG^\transpose \mA^\transpose \left( \nabla \gL \nabla \gL^\transpose \right)^{\frac{\dagger}{2}} \nabla \gL \right) + \Tr\left( \nabla \gL^\transpose \left( \nabla \gL \nabla \gL^\transpose \right)^{\dagger} \nabla \gL \right) \right] \\
    &= \Tr\left( \mA \E \left[ \mG \mG^\transpose \right] \mA^\transpose \right)  - 2 \Tr\left( \nabla \gL^\transpose \mA^\transpose \left( \nabla \gL \nabla \gL^\transpose \right)^{\frac{\dagger}{2}} \nabla \gL \right) + k \\
    &= || \mA \nabla \gL - \left( \nabla \gL \nabla \gL^\transpose \right)^{\frac{\dagger}{2}} \nabla \gL ||_F^2 + \Tr\left( \mA \mathrm{Cov}_\mathrm{row}\left( \mG \right) \mA^\transpose \right),
\end{split}
\end{equation}
where $^\dagger$ denotes the pseudoinverse.
Computing the derivative and setting it to zero,
\begin{equation}
    \frac{\partial}{\partial \mA} \left( \E \left[ || \mA \mG - \mU \mV^\transpose ||_F^2 \right] \right) = 2 \mA \E \left[  \mG \mG^\transpose \right] - 2 \left( \nabla \gL \nabla \gL^\transpose \right)^{\frac{\dagger}{2}} \nabla \gL \nabla \gL^\transpose = \mathbf{0},
\end{equation}
and assuming $\E[ \mG \mG^\transpose ]$ is full rank, yields the optimal solution
\begin{equation}
\label{eq:optimal-A-spectral}
\begin{split}
    \mA &= \left( \nabla \gL \nabla \gL^\transpose \right)^{ \frac{1}{2}} \E\left[ \mG \mG^\transpose \right]^{-1}. \\
\end{split}
\end{equation}
\end{proof}

\matrixwhitening*

\whitening*
\begin{proof}
    We have that
    \begin{equation}
    \begin{split}
        \mathrm{Cov}_\mathrm{row}\left( \mA \mG \mB \right) = \mI, \hspace{1em} & \hspace{1em} \mathrm{Cov}_\mathrm{col}\left( \mA \mG \mB \right) = \mI \\
        \equiv \mA \mathrm{Cov}_\mathrm{row}\left( \mG \mB \right) \mA^\transpose = \mI, \hspace{1em} & \hspace{1em} \mB^\transpose \mathrm{Cov}_\mathrm{col}\left( \mA \mG \right) \mB = \mI \\
        \equiv \mA = \mathrm{Cov}_\mathrm{row}\left( \mG \mB \right)^{-\frac{1}{2}}, \hspace{1em} & \hspace{1em} \mB = \mathrm{Cov}_\mathrm{col}\left( \mA \mG \right)^{-\frac{1}{2}},
    \end{split}
    \end{equation}
    assuming the covariances are full rank.
\end{proof}

\taoe*

\idealklshampoo*
\begin{proof}
    We have the optimality condition
\begin{align*}
    \mathrm{EMA}_{t=1}^{T} \left({\E[\mZ_t \mZ_t^\transpose}] \right) = \mathbf{I}_m, &\hspace{1em} \mathrm{EMA}_{t=1}^{T} \left( {\E[\mZ_t^\transpose \mZ_t}] \right) = \mathbf{I}_n \\
    \equiv \hspace{1em} (1-\beta_2) \sum_{t=1}^T \beta_2^{T-t} \E[\mA \mG_t \mB \mB^\transpose \mG_t^\transpose \mA^\transpose] = \mathbf{I}_m, &\hspace{1em} (1-\beta_2) \sum_{t=1}^T \beta_2^{T-t} \E[\mB^\transpose \mG_t^\transpose \mA^\transpose \mA \mG_t \mB] = \mathbf{I}_n \\
    \equiv \hspace{1em} \mA \left( (1-\beta_2) \sum_{t=1}^T \beta_2^{T-t} \E[ \mG_t \mB \mB^\transpose \mG_t^\transpose ] \right) \mA^\transpose = \mathbf{I}_m, &\hspace{1em} \mB^\transpose \left( (1-\beta_2) \sum_{t=1}^T \beta_2^{T-t} \E[ \mG_t^\transpose \mA^\transpose \mA \mG_t ] \right) \mB = \mathbf{I}_n \\
    \equiv \hspace{1em} \mathrm{EMA}_{t=1}^{T} \left( \E[ \mG_t \mB \mB^\transpose \mG_t^\transpose ] \right) = \mA^{-2}, &\hspace{1em} \mathrm{EMA}_{t=1}^{T} \left( \E[ \mG_t^\transpose \mA^\transpose \mA \mG_t ] \right) = \mB^{-2} \\
    \equiv \hspace{1em} \mA = \mathrm{EMA}_{t=1}^{T} \left( \E[ \mG_t \mB^{2} \mG_t^\transpose ] \right)^{-\frac{1}{2}}, &\hspace{1em} \mB = \mathrm{EMA}_{t=1}^{T} \left( \E[ \mG_t^\transpose \mA^{2} \mG_t ] \right)^{-\frac{1}{2}}, \\
\end{align*}
where we assume that the EMAs are full rank in the last step.
\end{proof}

\instantklshampoo*

\begin{proof}
    With $\sigma_i$ being the $i$th singular value on the diagonal of $\mSigma \in \R^{k \times k}$, we have
\begin{equation}
\begin{aligned}
    \mL_t &= \beta_2 \mL_{t-1} + (1 - \beta_2) \mG \mR_{t-1}^{-1} \mG^\transpose \\
    &= \beta_2 \mL_{t-1} + (1 - \beta_2) \mU \left( \mSigma \mV^\transpose \mR_{t-1}^{-1} \mV \mSigma^\transpose \right) \mU^\transpose \\
    \bar{\mL}_t &:= \mU^\transpose \mL_t \mU \\
    &= \beta_2 \bar{\mL}_{t-1} + (1 - \beta_2) \mSigma \bar{\mR}_{t-1}^{-1} \mSigma^\transpose \\
    \\
    \mR_t &= \beta_2 \mR_{t-1} + (1 - \beta_2) \mG^\transpose \mL_{t-1}^{-1} \mG \\
    &= \beta_2 \mR_{t-1} + (1 - \beta_2) \mV \left( \mSigma^\transpose \mU^\transpose \mL_{t-1}^{-1} \mU \mSigma \right) \mV^\transpose \\
    \bar{\mR}_t &:= \mV^\transpose \mR_t \mV \\
    &= \beta_2 \bar{\mR}_{t-1} + (1 - \beta_2) \mSigma^\transpose \bar{\mL}_{t-1}^{-1} \mSigma. \\
\end{aligned}
\end{equation}
Note that if we initialize $\mL_0 = \mR_0 = c\mathbf{I}$ for any $c \in \R^+$, the matrices $\bar{\mL}_t$ and $\bar{\mR}_t$ will remain diagonal matrices.
Hence, we can simplify the update and express it in terms of the diagonal entries of $\bar{\mL}_t$, $l_{t, i}$, and of $\bar{\mR}_t$, $r_{t, i}$, for $i = 1, \dots, k$, where $k$ is the rank of the matrix $\mG$.
We drop the subscript $i$, since the update evolves in the same way for each $i$.
We have
\begin{equation}
\begin{aligned}
    l_t &= \beta_2 l_{t-1} + (1 - \beta_2) \frac{\sigma^2}{r_{t-1}} \\
    r_t &= \beta_2 r_{t-1} + (1 - \beta_2) \frac{\sigma^2}{l_{t-1}}. \\
\end{aligned} 
\end{equation}
Since we initialize $l_t=r_t$, the sequences will evolve identically and we will consider them jointly using $x_t=l_t=r_t$:
\begin{equation}
    x_t = \beta_2 x_{t-1} + (1 - \beta_2) \frac{\sigma^2}{x_{t-1}}
\end{equation}
We want to show $x_t \rightarrow \sigma$ for $t \rightarrow \infty$.
First, note that $\sigma$ is a fixed point for the function $f(x) = \beta_2 x + (1 - \beta_2) \frac{\sigma^2}{x}$, \ie $f(\sigma) = \sigma$.
We have $f'(x) = \beta_2 - (1 - \beta_2) \frac{\sigma^2}{x^2}$ and $f'(\sigma) = \beta_2 - (1 - \beta_2) = 2 \beta_2 - 1$.
For the iteration to converge to the fixed point $\sigma$, we need $|f'(\sigma)| = |2\beta_2 - 1| < 1$, so we have $x_t \rightarrow \sigma$ for $\beta_2 \in (0, 1)$.

Note that if we choose $\beta_2 = 0.5$, the iteration becomes $x_t = \frac{1}{2} (x_{t-1} + \frac{\sigma^2}{x_{t-1}})$. We can define a function $g(x) = x^2 - \sigma^2$ with derivative $g'(x) = 2x$ and use Newton's method to find the root of $g(x)$, \ie 
\begin{equation}
\begin{split}
    x_{t} &= x_{t-1} - \frac{g(x_{t-1})}{g'(x_{t-1})} \\
    &= x_{t-1} - \frac{x_{t-1}^2 - \sigma^2}{2x_{t-1}} \\
    &= x_{t-1} - \frac{x_{t-1} - \frac{\sigma^2}{x_{t-1}}}{2} \\
    &= \frac{1}{2} \left( x_{t-1} + \frac{\sigma^2}{x_{t-1}} \right),
\end{split}
\end{equation}
which exactly recovers our iteration.
In this case, we have quadratic convergence.

Now we can transform this result back to the original space
\begin{equation}
\begin{aligned}
    \mL_\infty : = \lim_{t \rightarrow \infty} \mL_t &= \lim_{t \rightarrow \infty} \mU \bar{\mL}_t \mU^\transpose = \mU \mSigma \mU^\transpose = \left( \mG \mG^\transpose \right)^\frac{1}{2}, \\
    \mR_\infty : = \lim_{t \rightarrow \infty} \mR_t &= \lim_{t \rightarrow \infty} \mV \bar{\mR}_t \mV^\transpose = \mV \mSigma \mV^\transpose = \left( \mG^\transpose \mG \right)^\frac{1}{2}. \\
\end{aligned}
\end{equation}
When we now precondition $\mG$ with these matrices, using $p=1/2$, we have
\begin{equation}
\label{eq:alt-ortho}
\begin{split}
    \mL_\infty^{\frac{\dagger}{2}} \mG \mR_\infty^{\frac{\dagger}{2}} &= \left(\left( \mG \mG^\transpose \right)^\frac{1}{2} \right)^{\frac{\dagger}{2}} \mG \left(\left( \mG^\transpose \mG \right)^\frac{1}{2}\right)^{\frac{\dagger}{2}} \\
    &= \left( \mG \mG^\transpose \right)^{\frac{\dagger}{4}} \mG \left( \mG^\transpose \mG \right)^{\frac{\dagger}{4}} = \mU \mV^\transpose.
\end{split}
\end{equation}
\end{proof}

\section{Additional Discussion}

\subsection{Variance Adaptation}
\label{sec:notes-variance-adaptation}

We can generalize \Cref{prop:one-sided-optimal-adaptation-spectral}:

\paragraph{$\mB = \mathbf{I}$.}
See \Cref{prop:one-sided-optimal-adaptation-spectral} and \Cref{sec:proofs-appendix}.

\paragraph{$\mA = \mathbf{I}$.}

Analogously, the solution is
\begin{equation}
    \mB = \E\left[ \mG^\transpose \mG \right]^{-1} \left( \nabla \gL^\transpose \nabla \gL \right)^{ \frac{1}{2}}.
\end{equation}
With this we can rewrite the corrected stochastic direction $\mG = \hat{\mU} \hat{\mSigma} \hat{\mV}^\transpose$ as
\begin{equation}
     \mG \mB = \hat{\mU} \hat{\mV}^\transpose \left( \mG^\transpose \mG \right)^{\frac{1}{2}} \E\left[ \mG^\transpose \mG \right]^{-1} \left( \nabla \gL^\transpose \nabla \gL \right)^{ \frac{1}{2}}.
\end{equation}

In the deterministic setting, we have $\mA = \left( \nabla \gL \nabla \gL^\transpose \right)^{\frac{\dagger}{2}}$ or $\mB = \left( \nabla \gL^\transpose \nabla \gL \right)^{\frac{\dagger}{2}}$, which means $\mA$ or $\mB$ just orthogonalizes $\nabla \gL$, \ie the solution is exact.

\paragraph{General case.}
We have the following optimality condition:
\begin{equation}
\begin{split}
    \mA &= \left( \left( \nabla \gL \nabla \gL^\transpose \right)^{\frac{\dagger}{2}} \nabla \gL \mB^\transpose \nabla \gL^\transpose \right) \E\left[ \mG \mB \mB^\transpose \mG^\transpose \right]^{-1}, \\
    \mB &= \E\left[ \mG^\transpose \mA^\transpose \mA \mG \right]^{-1} \left( \nabla \gL^\transpose \mA^\transpose \nabla \gL \left( \nabla \gL^\transpose \nabla \gL \right)^{\frac{\dagger}{2}} \right).
\end{split}
\end{equation}

\paragraph{Dual use of $\mM$.}
Let $\mM = \bar{\mU} \bar{\mSigma} \bar{\mV}^\transpose$ be an estimate of $\nabla \gL$ based on multiple samples of $\mG$, \eg an EMA.
Analogously to the Adam case in \Cref{eq:optimal-variance-approx}, we will now dual-use this estimate by 1) replacing the stochastic gradient $\mG$ that we want to adapt with this, and 2) approximate the true gradient $\nabla \gL$ with it as part of the optimal solution (we denote the solution with this approximation as $\bar{\mA}, \bar{\mB}$).
We consider the case in \Cref{prop:one-sided-optimal-adaptation-spectral} for simplicity, but the other cases follow similarly.

We first apply $\bar{\mA}$ to $\mM$:
\begin{equation}
\begin{split}
    \bar{\mA} \mM &= \bar{\mA} \left( \mM \mM^\transpose \right)^{\frac{1}{2}} \bar{\mU} \bar{\mV}^\transpose \\
    &= \left( \mM \mM^\transpose \right)^{ \frac{1}{2}} \E\left[ \mG \mG^\transpose \right]^{-1} \left( \mM \mM^\transpose \right)^{\frac{1}{2}} \bar{\mU} \bar{\mV}^\transpose \\
    &= \mathcolor{Maroon}{\bar{\mU}} \mathcolor{MidnightBlue}{\left( \bar{\mSigma} \left( \bar{\mU}^\transpose \E\left[ \mG \mG^\transpose \right]^{-1} \bar{\mU} \right) \bar{\mSigma} \right)} \mathcolor{Maroon}{\bar{\mV}^\transpose}. \\
\end{split}
\end{equation}
In the case where $\bar{\mU}^\transpose \E\left[ \mG \mG^\transpose \right]^{-1} \bar{\mU} =: \mD$ is diagonal, we have
\begin{equation}
    \mathcolor{Maroon}{\bar{\mU}} \mathcolor{MidnightBlue}{\left( \bar{\mSigma}^2 \mD \right)} \mathcolor{Maroon}{\bar{\mV}^\transpose},
\end{equation}
where the whole matrix is the middle, the singular values of the update, is diagonal and the left and right singular vectors are shared with $\mM$.
The left-sided (KL-)Shampoo update without time-averaging and with an expectation instead of a single sample can be written as
\begin{equation}
\begin{split}
    \mL^{-\frac{1}{2}} \mM &= \mathcolor{Maroon}{\bar{\mU}} \mathcolor{MidnightBlue}{\left( \left( \bar{\mU}^\transpose \E\left[ \mG \mG^\transpose \right]^{-\frac{1}{2}} \bar{\mU} \right) \bar{\mSigma} \right)} \mathcolor{Maroon}{\bar{\mV}^\transpose},
\end{split}
\end{equation}
assuming $\mM$ has full row rank, \ie $\bar{\mU} \bar{\mU}^\transpose = \mathbf{I}$.
To highlight the similarity to the Adam case in \Cref{eq:adam-in-terms-of-optimal}, we again assume diagonality and plug in $\mD$:
\begin{equation}
    \mathcolor{Maroon}{\bar{\mU}} \mathcolor{MidnightBlue}{\left( \bar{\mSigma} \mD^{-\frac{1}{2}} \right) \mathcolor{Maroon}{\bar{\mV}^\transpose} = \mathcolor{Maroon}{\bar{\mU}} \left( \bar{\mSigma}^2 \mD \right)}^{\textcolor{orange}{\frac{1}{2}}} \mathcolor{Maroon}{\bar{\mV}^\transpose},
\end{equation}
where the only difference to the optimal solution is the square root, just like in the Adam case in \Cref{eq:adam-in-terms-of-optimal}.

Finally, this suggests an alternative decomposition of the Shampoo update in \Cref{eq:shampoo-decomposition},
\begin{equation}
    \mL_t^{-p} \mM_t \mR_t^{-p} = \mathcolor{Maroon}{\bar{\mU}} \mathcolor{MidnightBlue}{\left( \bar{\mU}^\transpose \mL_t^{-p} \bar{\mU} \right) \bar{\mSigma} \left( \bar{\mV}^\transpose \mR_t^{-p} \bar{\mV} \right)} \mathcolor{Maroon}{\bar{\mV}^\transpose},
\end{equation}
assuming $\mM_t$ is full rank.

\subsection{Whitening}

\begin{definition}[Whitening of a random vector]
\label{def:whitening-vector}
    Let $\hat{\vg} \in \R^{m}$ be a random vector with mean $\E\left[ \hat{\vg}\right] = \vg$ and covariance matrix $\mathrm{Cov}\left( \hat{\vg} \right) = \E\left[ \hat{\vg} \hat{\vg}^\transpose \right] - \vg \vg^\transpose$.
    We call the multiplication of a symmetric positive definite matrix $\mA \in \R^{m \times m}$ with $\hat{\vg}$ \emph{whitening} iff $\mathrm{Cov}\left( \mA \hat{\vg} \right) = \mI$.
\end{definition}
\begin{corollary}[Whitening matrix]
    The matrix that whitens a random vector according to \Cref{def:whitening-vector} is
    \begin{equation}
        \mA_\mathrm{whitening} = \mathrm{Cov}\left( \hat{\vg} \right)^{-\frac{1}{2}}.
    \end{equation}
\end{corollary}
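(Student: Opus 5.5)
Since \Cref{def:whitening-vector} requires $\mathrm{Cov}(\mA \hat{\vg}) = \mI$ with $\mA$ symmetric positive definite, I will first rewrite the covariance of the transformed vector using linearity of expectation. The mean is $\E[\mA\hat{\vg}] = \mA\vg$, so
\begin{equation*}
    \mathrm{Cov}(\mA\hat{\vg}) = \E\left[\mA\hat{\vg}\hat{\vg}^\transpose\mA^\transpose\right] - \mA\vg\vg^\transpose\mA^\transpose = \mA\,\mathrm{Cov}(\hat{\vg})\,\mA^\transpose = \mA\,\mathrm{Cov}(\hat{\vg})\,\mA,
\end{equation*}
where the last equality uses the symmetry of $\mA$. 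This mirrors the first step in the proof of \Cref{cor:full-whitening}, now with only one side.

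The whitening condition therefore becomes $\mA \mSigma \mA = \mI$ with $\mSigma := \mathrm{Cov}(\hat{\vg})$. Following the convention in the proofs of \Cref{cor:full-whitening} and \Cref{cor:ideal-kl-shampoo}, I assume $\mSigma$ is full rank, hence symmetric positive definite. Then $\mSigma^{-\frac{1}{2}}$ is well defined via the eigendecomposition $\mSigma = \mathbf{Q}\mathbf{\Lambda}\mathbf{Q}^\transpose$. Substituting shows it satisfies the condition: $\mSigma^{-\frac{1}{2}}\mSigma\mSigma^{-\frac{1}{2}} = \mI$, and it is symmetric positive definite, so it is admissible.

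The main point, if uniqueness is intended, is showing that no other symmetric positive definite $\mA$ works. From $\mA\mSigma\mA = \mI$, I will conjugate by $\mSigma^{\frac{1}{2}}$: define $\mathbf{N} := \mSigma^{\frac{1}{2}}\mA\mSigma^{\frac{1}{2}}$, which is symmetric positive definite. Then $\mathbf{N}^2 = \mSigma^{\frac{1}{2}}\mA\mSigma\mA\mSigma^{\frac{1}{2}} = \mSigma$. By uniqueness of the positive definite square root, $\mathbf{N} = \mSigma^{\frac{1}{2}}$, hence $\mA = \mSigma^{-\frac{1}{2}}\mSigma^{\frac{1}{2}}\mSigma^{-\frac{1}{2}} = \mSigma^{-\frac{1}{2}}$. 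This yields $\mA_\mathrm{whitening} = \mathrm{Cov}(\hat{\vg})^{-\frac{1}{2}}$.
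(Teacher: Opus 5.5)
Your proof is correct and takes the same route as the paper's: rewrite $\mathrm{Cov}(\mA\hat{\vg}) = \mI$ as $\mA\,\mathrm{Cov}(\hat{\vg})\,\mA = \mI$ and solve it, assuming full rank. The only addition is that you prove the uniqueness step explicitly via the unique positive definite square root, whereas the paper's final equivalence takes it for granted and just remarks that the symmetry constraint makes the solution unique; as a small shortcut, invertibility of $\mA$ already gives $\mathrm{Cov}(\hat{\vg}) = \mA^{-2}$, so the full-rank assumption holds automatically and $\mA^{-1} = \mathrm{Cov}(\hat{\vg})^{\frac{1}{2}}$ follows directly.
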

\begin{proof}
    By \Cref{def:whitening-vector}, we have
    \begin{equation}
    \begin{split}
        &\mathrm{Cov}\left( \mA \hat{\vg} \right) = \mI \\
        \equiv &\mA \mathrm{Cov}\left( \hat{\vg} \right) \mA^\transpose = \mI \\
        \equiv &\mA = \mathrm{Cov}\left( \hat{\vg} \right)^{-\frac{1}{2}},
    \end{split} 
    \end{equation}
    assuming the covariance is full rank. Note that the solution is only unique because we constrained $\mA$ to be symmetric \citep{kessy2018optimal}.
\end{proof}

\subsection{What Should We Precondition?}
\label{sec:what-to-precondition}

No matter what interpretation of Shampoo we consider, we have to decide which matrix to precondition, specifically the gradient $\mG_t$ or EMA $\mM_t$.
In the standard Shampoo update, we precondition $\mM_t$ with a statistic of the gradient $\mG_t$, \ie according to all interpretations based on adaptation to stochasticity, there is a mismatch.\footnote{The same applies to Adam.}
To resolve this potential mismatch, we can either consider LaProp KL-Shampoo (preconditioning $\mG_t$; \cf \Cref{sec:laprop-shampoo}) or BCOS-m KL-Shampoo (preconditioning with statistic of $\mM_t$; \cf \Cref{sec:bcosm-shampoo}).
In practice, we find that KL-Shampoo matches LaProp KL-Shampoo and outperforms BCOS-m KL-Shampoo, see \Cref{tab:kl-shampoo-variants}.
\begin{table}[ht]
\caption{KL-Shampoo variants; same setting as \Cref{tab:core-methods}, 320M parameters, $1\times$ token budget, and batch size $256$.}
\label{tab:kl-shampoo-variants}
\centering
\begin{tabular}{Glll}
    \toprule
    KL-Shampoo & + LaProp & + BCOS-m \\
    \hline
    $24.95 \pm 0.09$ & $24.94 \pm 0.06$ & $25.15 \pm 0.07$ \\
    \bottomrule
\end{tabular}
\end{table}

In light of these results, it is natural to ask how preconditioning and momentum ought to be rigorously combined. 
Our interpretation of adaptation described in \Cref{sec:description} and elaborated on in \Cref{sec:adapting-descent} suggests that the LaProp variant may be most promising because it cleanly separates adaptation and momentum.
However, how to rigorously handle the two EMAs in Adam and Shampoo remains an open question.
Similar questions can be posed for spectral descent and Muon.

\subsection{The Exponent Paradox}
\label{sec:exponent}

Note that the connection between Shampoo$^{\sfrac{1}{4}}$ without accumulation and spectral descent shown in \Cref{eq:shampoo-spectral} relies on $p=1/4$.
However, $p=\sfrac{1}{2}$ is reported to empirically perform better than $p = \sfrac{1}{4}$ \citep{shi2023distributed,eschenhagen2025purifying}, including in our experiments in \Cref{tab:core-methods}.
When we change the exponent and using the same assumptions, we have
\begin{equation}
\label{eq:shampoo-different-estimator-square-no-beta2}
\begin{split}
    (\mG \mG^\transpose)^{-\frac{1}{2}} \mG (\mG^\transpose \mG)^{-\frac{1}{2}} %
    &= \mU \mSigma^{-1} \mV^\transpose,
\end{split}
\end{equation}
\ie we recover the pseudoinverse instead of the orthogonalized (EMA of the) gradient.
To argue for a pseudoinverse gradient descent ($\pi$GD) based perspective, $\pi$GD should perform better than spectral descent in the same setting.
We ran some preliminary experiments tuning the absolute and relative tolerance of $\pi$GD applied to the EMA of the gradient, but none of the runs came close to matching Muon.
Also, we see no clear reason why the pseudoinverse gradient is a sensible update in this setting.
Instead, investigating whether Shampoo$^{\sfrac{1}{2}}$ is a better approximation of KL-Shampoo (which also uses $p=\sfrac{1}{2}$) than Shampoo$^{\sfrac{1}{4}}$ appears to be a more promising direction (\cf \Cref{prop:instant-kl-shampoo}).

\section{Adapting Sign and Spectral Descent to Stochasticity and the Parameter Trajectory}
\label{sec:adapting-descent}

We can analogously adapt sign and spectral descent to stochasticity and the parameter trajectory by relaxing their enforced constraints and define modified norms on (sequences of) random variables.
It remains unclear, however, if these formulations can be leveraged to generate theoretical insights, \eg to derive meaningful convergence rates.

\subsection{Adapting Sign Descent}
\label{sec:adapting-sign-descent}

\subsubsection{RMSProp's Adapted Optimality Condition}
\label{sec:rmsprop-adapted-optimality}

Let $\E[ \cdot | \vtheta ]$ denote the conditional expectation given deterministic parameters $\vtheta$ and $\vd \in \R^n$ be the update we wish to compute.

Recall that sign descent enforces updates $\vd$ with $d_i^2 = 1$ for all $i = 1, ..., n$. 
(Note that the signed gradient is the closest direction to the gradient with unit $\ell_\infty$ norm).
We can adapt this condition to the stochastic setting and instead enforce $\E[d_i^2] = 1$. 
Observe that the non-time-averaged, idealized (\ie, in expectation) RMSProp algorithm fulfills this condition since we have $\E[(\frac{g_i}{\sqrt{\E[g_i^2]}})^2] = 1$.
The natural relaxation to include adaptation to the parameter trajectory is
\begin{equation}
\label{eq:adapted-sign-optimality}
\boxed{
    \mathrm{EMA}_{t=1}^T (\E[d_i^2 | \vtheta_t]) = (1-\beta_2) \sum_{t=1}^T \beta_2^{T-t} \E[d_i^2 | \vtheta_t] = 1.
}
\end{equation}
Idealized and deterministic RMSProp fulfill this condition as
\begin{equation}
    \mathrm{EMA}_{s=1}^S \left( \E \left[ \left( \frac{g_i}{\sqrt{\mathrm{EMA}_{t=1}^T (\E[g_i^2 | \vtheta_t])}} \right)^2 \Bigg| \vtheta_s \right]  \right) = 1.
\end{equation}

\subsubsection{RMSProp as Descent in an $\ell_\infty(L^2)$ Norm}

More precisely, we can define an $\ell_\infty$ norm over a coordinate-wise $L^2$ norm, i.e.,
\begin{equation}
\label{eq:rmsprop-norm}
\boxed{
\| \vd \|_{\ell_\infty(L^2)} := \| ( \|d_1\|_{L^2}, \dots, \|d_n\|_{L^2} ) \|_{\infty} =\max_{i=1,\dots,n} \sqrt{ \E[ d_i^2 | \vtheta ] } .
}
\end{equation}
When $\vd$ is not a random variable, this norm reduces to the $\ell_\infty$ norm, yielding sign descent in the deterministic setting.
However, since we now consider a random variable, we have to also choose a $L^p$ norm for the corresponding Banach space.
Here, selecting the element-wise $L^2$ norm precisely characterizes \emph{how} RMSProp adapts sign descent to stochasticity.

We choose the update function $\vd$ that minimizes the expected linearized loss at $\vtheta$ subject to the norm constraint:
\begin{equation}
\min_{\vd} \; \E[ \vg^\transpose \vd | \vtheta ] \quad \text{s.t.} \quad \| \vd \|_{\ell_\infty(L^2)} \le 1 .
\end{equation}
The constraint can be written equivalently as
\begin{equation}
\E[ d_i^2 | \vtheta ] \le 1 , \quad i = 1, \dots, n .
\end{equation}
Hence, the problem decouples into $n$ independent one-dimensional problems,
\begin{equation}
\min_{d_i} \; \E[ g_i d_i | \vtheta ] \quad \text{s.t.} \quad \E[ d_i^2 | \vtheta ] \le 1 .
\end{equation}
For each coordinate $i$, by the conditional Cauchy--Schwarz inequality, we have
\begin{equation}
\big| \mathbb{E}[ g_i d_i \mid \theta ] \big| \le \sqrt{ \mathbb{E}[ g_i^2 \mid \theta ] } \; \sqrt{ \mathbb{E}[ d_i^2 \mid \theta ] } ,
\end{equation}
and equality is achieved when $d_i$ is proportional to $g_i$, \ie $d_i = - \alpha_i g_i$ for some scalar $\alpha_i > 0$ (negative sign to minimize the linear term).
Substituting $d_i = -\alpha_i g_i$ into the constraint $\E[ d_i^2 \mid \vtheta ] = 1$ yields
\begin{equation}
\alpha_i^2 \E[ g_i^2 \mid \vtheta ] = 1 
\quad \implies \quad 
\alpha_i = \frac{1}{\sqrt{\E[ g_i^2 \mid \vtheta ]}} 
\quad \implies \quad 
d_i = - \frac{ g_i }{ \sqrt{ \E[ g_i^2 \mid \vtheta ] } }.
\end{equation}
Therefore, the update is
\begin{equation}
\boxed{
\vd = - \frac{ \vg }{ \sqrt{ \E[ \vg^{2} | \vtheta ] } } ,
}
\end{equation}
where the division is element-wise.
Note that in the deterministic case, this will simplify to sign descent.

\subsection{Adapting Spectral Descent}
\label{sec:adapting-spectral-descent}

\subsubsection{Shampoo's Adapted Optimality Condition}
\label{sec:shampoo-adapted-optimality}

In order to derive Shampoo's adapted optimality condition, recall \Cref{def:time-averaged-orthogonality} and \Cref{cor:ideal-kl-shampoo}.
Due to the coupling of $\mA$ and $\mB$ in \Cref{eq:idealized-kl-shampoo}, solving the optimality condition would require storing all prior gradients and computing the EMA at every iteration, in contrast to the simpler case of RMSProp in \Cref{sec:rmsprop-adapted-optimality}.
Instead, we can make a simplifying assumption by using the historical $\mA_{t-1}$ and $\mB_{t-1}$ to compute the current preconditioner recursively, \ie
\begin{equation}
    \hspace{1em} \mA_T = \mathrm{EMA}_{t=1}^{T} \left( \E[ \mG_t \mB_{t-1}^2 \mG_t^\transpose ] \right)^{-\frac{1}{2}}, \hspace{1em} \mB_T = \mathrm{EMA}_{t=1}^{T} \left( \E[ \mG_t^\transpose \mA_{t-1}^2 \mG_t ] \right)^{-\frac{1}{2}}.
\end{equation}
This corresponds to KL-Shampoo.
If $\mL_t$ and $\mR_t$ change slowly such that $\mL_{t-1} \approx \mL_t$ and $\mR_{t-1} \approx \mR_t$ for the terms in the EMA with significant weight, this approximately solves the condition.

\subsubsection{One-sided (KL-)Shampoo without Momentum as Descent in a $S_\infty(L^2)$ Norm}

For a deterministic matrix $\mX$, the spectral norm is defined as $||\mX||_* = \sqrt{ \lambda_\mathrm{max}(\mX \mX^\transpose) } = \sqrt{ \lambda_\mathrm{max}(\mX^\transpose \mX) }$.
However, when $\mX$ is a matrix-valued random variable, $\lambda_\mathrm{max}(\E[\mX \mX^\transpose]) \neq \lambda_\mathrm{max}(\E[\mX^\transpose \mX])$.
This motivates an extension of the spectral norm for random matrices via an inner $L^2$ norm that is analogous to \Cref{eq:rmsprop-norm}:
\begin{equation}
\boxed{
    || \mX ||_{S_\infty(L^2)} = \max\bigl\{ \sqrt{ \lambda_\mathrm{max}(\E[\mX \mX^\transpose]) }, \sqrt{ \lambda_\mathrm{max}(\E[\mX^\transpose \mX]) } \bigr\} .
}
\end{equation}
Trying to derive a steepest descent direction under this norm results in a Sylvester equation, due to the coupled constraint on the row and column space.
Instead, we will focus on constraining only one of the two.

Stochastic steepest descent chooses the update $\mD$ (a deterministic function of the random variable $\mG$) that minimizes the local linearized loss subject to a left-spectral norm constraint:
\begin{equation}
\min_{\mD} \; \E \bigl[ \text{Tr}(\mG^\transpose \mD) \mid \vtheta \bigr] 
\quad \text{s.t.} \quad 
\E[\mD \mD^\transpose \mid \vtheta] \preceq \mI .
\end{equation}
We introduce a symmetric positive semi-definite Lagrange multiplier matrix $\mA \succeq 0$. The Lagrangian is
\begin{equation}
\mathcal{L}(\mD, \mA) = \E \bigl[ \text{Tr}(\mG^\transpose \mD) \bigr] + \text{Tr}\Bigl( \mA \bigl( \E[\mD \mD^\transpose] - \mI \bigr) \Bigr) .
\end{equation}
Using the cyclic property of the trace to group terms under the expectation, we obtain
\begin{equation}
\mathcal{L}(\mD, \mA) = \E \Bigl[ \text{Tr}(\mG^\transpose \mD) + \text{Tr}(\mD^\transpose \mA \mD) \Bigr] - \text{Tr}(\mA),
\end{equation}
and by computing the gradient and setting it to zero, we have
\begin{equation}
\nabla_{\mD} \mathcal{L} = \mG + 2 \mA \mD = 0 \quad \implies \quad \mD = -\frac{1}{2} \mA^{-1} \mG .
\end{equation}
Absorbing the scalar factor into the multiplier (redefining $\mA \leftarrow 2\mA$), we have the structural form $\mD = -\mA^{-1} \mG$. To determine $\mA$, we substitute this into the constraint $\E[\mD \mD^\transpose] = \mI$:
\begin{equation}
\E \bigl[ (\mA^{-1} \mG) (\mA^{-1} \mG)^\transpose \bigr] = \mA^{-1} \E[\mG \mG^\transpose] \mA^{-1} = \mI .
\end{equation}
Therefore, the steepest descent update is
\begin{equation}
\boxed{
\mD = - \E[ \mG \mG^\transpose \mid \vtheta ]^{-\frac{1}{2}} \mG,
}
\end{equation}
assuming $\E[ \mG \mG^\transpose \mid \vtheta ]$ is invertible.
We recover one-sided (KL-)Shampoo without time-averaging and an expectation instead of a single sample estimate.

\end{document}